\documentclass[11pt,letterpaper]{article}

\usepackage[margin=1in]{geometry}
\usepackage{amsmath,amssymb,amsthm,mathtools}
\usepackage{microtype}
\usepackage{booktabs}
\usepackage{enumitem}

\usepackage{xcolor}
\usepackage[backref=page,colorlinks,citecolor=blue,linkcolor=BrickRed]{hyperref}
\usepackage{cleveref}
\setlist[itemize]{topsep=3pt,itemsep=2pt,parsep=0pt,leftmargin=1.4em}
\setlist[enumerate]{topsep=3pt,itemsep=2pt,parsep=0pt,leftmargin=1.6em}
\usepackage{algorithm}
\usepackage{algpseudocode}
\usepackage{xspace}
\newtheorem{theorem}{Theorem}[section]
\newtheorem{lemma}{Lemma}[section]

\newtheorem{corollary}{Corollary}[section]

\theoremstyle{definition}

\newtheorem{remark}[theorem]{Remark}

\newcommand{\snote}[1]{\textcolor{blue}{Sahil: #1}}
\DeclareMathOperator{\VC}{VC}

\DeclareMathOperator{\Reg}{Reg}

\DeclareMathOperator{\clip}{\mathrm{clip}}

\newcommand{\IGNORE}[1]{}
\newcommand{\E}{\mathop{\mathbb{E}}}
\newcommand{\calD}{\mathcal{D}}
\newcommand{\calH}{\mathcal H}
\newcommand{\calF}{\mathcal F}

\newcommand{\calA}{\mathcal A}
\newcommand{\calX}{\mathcal X}
\newcommand{\calY}{\mathcal Y}

\newcommand{\bbE}{\mathbb E}
\newcommand{\bbP}{\mathbb P}

\newcommand{\eps}{\varepsilon}
\newcommand{\PROD}{\ensuremath{\mathsf{PROD}}\xspace}
\newcommand{\ChainedPrediction}{\ensuremath{\mathsf{ChainedPrediction}}\xspace}
\newcommand{\Sample}{\mathsf{Preview}}
\newcommand{\Real}{\mathsf{Real}}
\newcommand{\tO}{\widetilde O}

\newcommand{\labelthis}[1]{%
    \refstepcounter{equation}%
    \tag{\theequation}%
    \label{#1}%
}
\definecolor{toc}{RGB}{13,55,174}	%dark blue smth
\usepackage{hyperref}				%for links in refs
\hypersetup{
colorlinks=true,
citecolor=toc,
filecolor=black,
linkcolor=toc,
urlcolor=toc
}

\usepackage{natbib}
\title{Adversarial Online Classification with a Preview}
\author{Roi Livni\footnote{School of Electrical \& Computer Engineering, Tel Aviv University. Email: roi.livni@mail.huji.ac.il.} \and  Sahil Singla\footnote{School of Computer Science, Georgia Institute of Technology, Atlanta, GA, USA. Email: ssingla@gatech.edu. Supported in part by NSF awards CCF-2327010 and CCF-2440113.}
}

\date{} %\today}

\begin{document}
\maketitle
\vspace{-1.3em}

\begin{abstract}
Worst-case online classification is governed by sequential complexity, such as Littlestone dimension, and can be impossible even for statistically simple classes, such as thresholds of VC dimension one. We study a \emph{preview model} in which an oblivious adversary fixes an entire labeled sequence of length $T$, a uniformly random subset of size $pT$ is revealed before prediction begins, and the remaining $(1-p)T$ examples are then presented in their original adversarial order.

\smallskip
 
Against the best full-sequence hypothesis evaluated on the unrevealed examples, we characterize the dependence on the preview rate $p$: for binary classes of VC dimension $d$, the optimal excess loss is $\Theta(d/p+\sqrt{dT})$, up to the trivial cap at $T$; for multiclass classes we obtain the corresponding $\widetilde O(d_{\rm DS}/p+\sqrt{d_{\rm Nat}T})$ bound with no dependence on the number of labels.  Thus a random preview can replace worst-case sequential complexity by classical statistical dimensions without randomizing the online order. To achieve the sharp binary bound, our \ChainedPrediction algorithm uses an online analogue of chaining, implemented as a multiscale aggregation algorithm rather than only as an analytic argument.

\end{abstract}

\vspace{.4em}
{\small\tableofcontents}

\newpage

\section{Introduction}\label{sec:intro}

PAC learning and online learning are two well-studied ways to formalize prediction from examples. In the PAC model, examples are drawn i.i.d.\,from an unknown distribution. In the online model, examples arrive sequentially, may be chosen adversarially, and the learner must predict before seeing the label.  The two models differ fundamentally in both their combinatorial limitations and their algorithmic tools.

More formally, let $\calX$ be an instance space, let $\calY$ be a label space, and let $\calH\subseteq \calY^{\calX}$ be a hypothesis class.
In the PAC model, an unknown distribution $\calD$ over $\calX\times\calY$ is fixed, and the goal is to learn a hypothesis whose risk is close to $\inf_{h\in\calH}\Pr_{(x,y)\sim\calD}\{h(x)\ne y\}$.
For binary classification, $\calY=\{0,1\}$, the fundamental theorem of statistical learning says that  the sample complexity is governed by the \emph{VC dimension} of the hypothesis class  \cite{VapnikChervonenkis1971,BlumerEhrenfeuchtHausslerWarmuth1989,Hanneke2016}. 
For multiclass classification the analogous picture is more recent: in the usual $\eps$-excess sample-complexity bounds, the DS dimension controls the leading $1/\eps$ term, while the Natarajan dimension controls the $1/\eps^2$ term, up to logarithmic factors \cite{Pabbaraju2026,CohenErezHannekeKorenMansourMoranZhang2025,BrukhimCarmonDinurMoranYehudayoff2022,DanielySabatoBenDavidShalevShwartz2015,Natarajan1989}.

In contrast to the PAC model, the online learning setting considers a learner that receives a sequence of examples and must predict their labels sequentially \cite{Littlestone1988}. These examples need not be generated stochastically, but instead, can be arbitrary and even chosen adversarially. 
The performance of the learner is then compared to the performance of a fixed hypothesis chosen in hindsight \cite{BenDavidPalShalevShwartz2009,RakhlinSridharanTewari2015}.
Online learning can be much harder than PAC learning, and learnability is no longer controlled by the VC dimension. The relevant parameter is instead the \emph{Littlestone dimension} \cite{Littlestone1988, BenDavidPalShalevShwartz2009}. A classical example that demonstrates the separation is for the hypothesis class of \emph{thresholds}:
        $\calH=\{x\mapsto {\bf 1}\{x\le \theta\}:\theta\in[0,1]\}$.
This class has VC dimension one and is statistically one of the simplest nontrivial classes. However, its Littlestone dimension is infinite: an adversary can play a binary search game where after each prediction, the next point is chosen in the remaining interval. By continuing this binary search for an arbitrary depth, the adversary obtains realizable sequences on which any learner makes arbitrarily many mistakes.

A common way to escape the threshold obstruction is to randomize the arrival process. If examples are i.i.d., or if an adversarially chosen data set is revealed in random order, then prefixes carry information about suffixes. 
In such settings, prefix-based learning becomes meaningful: past examples give useful information about future examples, and online-to-batch conversions formalize this intuition in the i.i.d.\ case \cite{CesaBianchiConconiGentile2004}. Recent work shows that in the random-order model online classification is characterized by VC dimension rather than Littlestone dimension \cite{BernasconiCelliColiniBaldeschiFuscoLeonardiRusso2025}. But random order changes the online information pattern: the learner sees a representative prefix before predicting a representative suffix. Under adversarial order, the first half of the sequence may bear little or no resemblance to the second half.

\paragraph{The Preview Model.} 
In this work, we study a different intervention, taken from sample-augmented online algorithms, where a random sample of an adversarial input is revealed before the online part begins \cite{KaplanNaoriRaz2020,KaplanNaoriRaz2022,ArgueFriezeGuptaSeiler2022}. 
We ask what this information pattern buys in online classification. The adversary still fixes the entire labeled sequence. However, before the online phase begins, the learner receives a random \emph{preview}: exactly $pT$ indices are chosen uniformly at random and revealed together with their positions, instances, and labels.  The unrevealed indices are then served online in their original adversarial order. The important point is that the preview is not a distributional assumption about the arrival order. The learner sees some examples from the future, but must still handle the remaining examples in the order chosen by the adversary. Thus the preview gives global information about the adversarial sequence, but it does not make the online prefix representative of the online suffix. The parameter $p$ measures how much of the sequence is previewed. If $p\approx 1/T$, the learner sees only constantly many examples, and the model is close to fully adversarial online learning. If $p=\Theta(1)$, the learner sees a constant fraction of the sequence before the online phase, and one might hope to recover statistical rates. The main question is what happens in the intermediate regime. For example, when $p=T^{-1/2}$, does binary classification behave like a VC-dimensional statistical problem, or like an adversarial online problem governed by Littlestone dimension?

We write $\Sample\subseteq[T]$ for the revealed (sampled) indices and $\Real=[T]\setminus\Sample$ for the indices predicted (real) online. For $A\subseteq[T]$, write 
\[L_A(h) :=\sum_{t\in A}{\bf 1}\{h(x_t)\ne y_t\}.\] 
Since the examples in $\Sample$ are observed before prediction begins, the learner is evaluated only on $\Real$. The comparator is the best hypothesis for the whole labeled sequence:
fix $h^\star\in\arg\min_{h\in\calH}L_{[T]}(h)$ under a fixed tie-breaking rule chosen before the preview, and benchmark the learner against $L_{\Real}(h^\star)$. Since $h^\star$ is fixed before the preview is drawn, $L_{\Real}(h^\star)$ concentrates around $(1-p)L_{[T]}(h^\star)$. Our results show that a small preview can turn the sequential-complexity obstruction of adversarial online learning into statistical-complexity bounds, without randomizing the online arrival order.

\IGNORE{
PAC learning and online learning are two well-studied ways to formalize prediction from examples. In the PAC model, examples are drawn i.i.d.\,from an unknown distribution. In the online model, adversarial examples arrive sequentially, and the learner must predict before seeing the label. 
The two models differ fundamentally in both their limitations and the algorithmic techniques. 

More formally, let $\calX$ be an instance space, let $\calY$ be a label space, and let $\calH\subseteq \calY^{\calX}$ be a hypothesis class.
In the PAC model, we assume an unknown distribution $\calD$ over $\calX\times\calY$ which is fixed, and the goal is to learn a hypothesis whose risk is close to        $\inf_{h\in\calH}\Pr_{(x,y)\sim\calD}\{h(x)\ne y\}$.
For binary classification, $\calY=\{0,1\}$, the fundamental theorem of statistical learning says that sample complexity is governed by the VC dimension \cite{VapnikChervonenkis1971,BlumerEhrenfeuchtHausslerWarmuth1989,Hanneke2016}, which is a combinatorial measure defined by the set of hypotheses to be learnt.  \snote{Add ERM here?} For multiclass classification the analogous picture is more recent: the DS dimension and Natarajan dimension govern the optimal sample complexity of multiclass, up to logarithmic factors \cite{Pabbaraju2026,CohenErezHannekeKorenMansourMoranZhang2025,BrukhimCarmonDinurMoranYehudayoff2022,DanielySabatoBenDavidShalevShwartz2015,Natarajan1989}.

In contrast to the PAC learning framework, the online learning setting considers a learner that receives a sequence of examples and must predict their labels sequentially. Unlike the PAC setup, the sequence of examples need not be generated stochastically, but instead, can be arbitrary and even chosen adversarially. The performance of the learner is then measured either by the number of mistakes \cite{Littlestone1988} (i.e. the \emph{mistake bound}), \snote{drop mistake bound as distracting?} or, in the agnostic setting by the \emph{regret} \cite{BenDavidPalShalevShwartz2009,RakhlinSridharanTewari2015} which measures the performance of the learner against the optimal hypothesis \emph{in hindsight} (i.e. the optimal fixed hypothesis given the whole observed sequence of examples).

It is known that online learning can be harder than PAC learning, and learnability is no longer controlled by the VC dimension. Instead a new measure arises, which is the \emph{Littlestone} dimension \cite{Littlestone1988, BenDavidPalShalevShwartz2009}. A classical example that demonstrates the separation between the two appears already for the class of \emph{thresholds}. Consider the hypothesis class
        $\calH=\{x\mapsto {\bf 1}\{x\le \theta\}:\theta\in[0,1]\}$.
This class has VC dimension one. Statistically, thresholds are among the simplest nontrivial classes. But its Littlestone dimension is infinite. An adversary can play a binary search game: after each prediction, the next point is chosen in the remaining interval, so that every round asks for a new bit of information about the threshold. The class is easy when examples are drawn from a distribution, but hard when the arrival order hides exactly the part of the line that will matter next.

Motivated by the gap between the hardness of PAC and online learning, researchers have investigated intermediate learning models that mitigate the adversarial nature of online learning while retaining its sequential structure and its ability to operate on arbitrary, non-stochastic sequences. More broadly, the goal is to understand the landscape of learning models that lie between these two extremes. Arguably, the simplest intermediate model arises when examples are presented sequentially but are drawn i.i.d. from an underlying distribution. In this setting, a simple \emph{Follow-the-Leader} (FTL), or equivalently \emph{Empirical Risk Minimization} (ERM), strategy suffices to achieve the optimal VC-dimension learning rates using the standard algorithmic tools from PAC learning. Another natural intermediate model is the \emph{random-order} setting, in which the dataset is chosen adversarially but revealed in a uniformly random permutation.  Although the data in this model is not necessarily i.i.d, recent work has shown using ERM that online classification is characterized by the VC dimension, rather than the Littlestone dimension \cite{BernasconiCelliColiniBaldeschiFuscoLeonardiRusso2025}. This is because random order fundamentally alters the information structure in online learning: every prefix of the sequence is, in expectation, representative of the remaining suffix. This property is absent under adversarial ordering, where the first half of the sequence may bear little or no resemblance to the second half.

\paragraph{The preview model.}  \snote{Start on pg 1}
In this work we consider a novel intermediate setting which enjoys the sample complexity upper bound of PAC learning, but nevertheless still requires, fundamentally, the algorithmic toolbox of online learning. 
We take as our starting point the sample-augmented viewpoint from online algorithms: a random sample of an adversarial input is revealed before the online part begins \cite{KaplanNaoriRaz2020,KaplanNaoriRaz2022,ArgueFriezeGuptaSeiler2022}. We ask what this information pattern buys in online classification. The adversary still fixes the entire labeled sequence of length $T$, and the unrevealed examples are served in their original adversarial order. Before the online phase begins, however, the learner receives a uniformly random  preview\footnote{We work in the model where the preview has size exactly $pT$. However, our techniques and results naturally extend to the related model where each index is independently revealed with probability $p$.} of size $pT$: each index is revealed with probability $p$, together with its position, instance, and label. We write $\Sample\subseteq[T]$ for the revealed (sampled) indices and $\Real=[T]\setminus\Sample$ for the indices predicted (real) online. The important point is that the preview is not a distributional assumption about the online phase. It is a random subset of the same adversarially ordered sequence. Thus the learner sees some examples from the future, but must still handle the remaining examples in the order chosen by the adversary.

For $A\subseteq[T]$, write $L_A(h)=\sum_{t\in A}{\bf 1}\{h(x_t)\ne y_t\}$. Since the examples in $\Sample$ are revealed for free, we charge loss only on $\Real$. The comparator is the best hypothesis for the whole labeled sequence: fix $h^\star\in\arg\min_{h\in\calH}L_{[T]}(h)$ using deterministic tie-breaking independent of the preview, and benchmark the learner against $L_{\Real}(h^\star)$. Since $h^\star$ is fixed before the preview is drawn, $L_{\Real}(h^\star)$ concentrates around $(1-p)L_{[T]}(h^\star)$. Thus the question is whether a small preview can turn the sequential-complexity obstruction of adversarial online learning into a statistical-complexity bound, without randomizing the order of the examples that remain.
}

\subsection{Our Results}
We now outline our main results and their implications. Our first result concerns binary classification. It shows that a random preview removes the Littlestone-dimension obstruction and leaves a bound controlled only by VC dimension, even though the online examples still arrive in adversarial order.

\begin{theorem}[Sharp binary bound]\label{thm:binary-upper}
Let $\calH\subseteq\{0,1\}^{\calX}$ have VC dimension $d\ge1$.  For every $0<p\le1/10$, 
there is an improper randomized learner $\calA$ such that, for every fixed labeled sequence with optimum $ h^\star\in\arg\min_{h\in\calH} L_{[T]}(h)$,
\[
        \bbE\big[L_{\Real}(\calA)-L_{\Real}(h^\star)\big]
        \le
        C\min\left\{T,\frac dp+\sqrt{dT}\right\}.
\]
Conversely,  there exists a binary class $\calH$ with $\VC(\calH)\le d$ such that every randomized learner $\calA$ has 
        $\bbE\big[L_{\Real}(\calA)-L_{\Real}(h^\star)\big]
        \ge
        c\min\left\{T,\frac dp+\sqrt{dT}\right\}$.
\end{theorem}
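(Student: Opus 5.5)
The upper bound and the lower bound are handled separately. The upper bound has two parts. First, I reduce to a finite class using the preview. By Sauer--Shelah, $\calH$ restricted to the $T$ instances of the sequence has at most $(eT/d)^d$ behaviours. The learner does not know the unrevealed instances, but it does know the positions of the previewed ones. Second, I run an experts-type aggregation over hypotheses, with the preview providing a ``warm start'' that controls the adversarial order.

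A naive version already gives a weaker rate. Run Hedge over the projection of $\calH$ onto all $T$ points, which is a finite set we can treat abstractly through the online instances. This gives regret $\sqrt{T\,d\log T}$, with no dependence on $p$ at all. That only matches the statement up to a $\log T$ factor, so the real work is removing the logarithm and exploiting $p$. The $d/p$ term should come from the following observation. The preview is a uniform sample of size $pT$ of the whole sequence. By uniform convergence for sampling without replacement (a VC/relative-deviation bound), every $h$ satisfies
\[
|L_{\Sample}(h)/p - L_{[T]}(h)| \lesssim \sqrt{L_{[T]}(h)\, d/p} + d/p .
\]
So the preview certifies the full-sequence loss of every hypothesis to within $d/p$ in the low-loss regime. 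This lets me initialize the aggregation with a data-dependent prior. Concretely, the prior weight on $h$ is $\propto \exp(-\eta L_{\Sample}(h)/p)$, which effectively restricts attention to hypotheses that are near-optimal on the whole sequence.

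To get the sharp $\sqrt{dT}$ without logarithms, I would implement chaining as a multiscale aggregation. I build nested covers $\calH_k$ of $\calH$ at scales $\epsilon_k=2^{-k}$ in the empirical (normalized Hamming) metric on the full sequence. The preview estimates these distances: disagreement counts on $\Sample$, rescaled by $1/p$, approximate disagreement on $[T]$ up to additive $d/p$. At each level I run Hedge over ``links'' $h_{k}\to h_{k+1}$ with losses equal to increments. The regret of level $k$ is about $\sqrt{\epsilon_k T\cdot d\log(1/\epsilon_k)}$, because increments are supported on an $\epsilon_k$-fraction of rounds and the covering number is $(1/\epsilon_k)^{O(d)}$. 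These sum geometrically to $O(\sqrt{dT})$. The scales bottom out at $\epsilon\approx d/(pT)$, below which the preview cannot resolve distances; that floor contributes the $d/p$ term. The randomized improper prediction is the composition of the per-level randomized choices.

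\textbf{Main obstacle.} The hardest step is making the level-wise regrets add up. Predictions from different levels must be combined into a single label, which I would handle by telescoping the losses of a sampled chain. In addition, the increment supports depend on unrevealed points, so the $\epsilon_k$-localized regret bound needs a second-order (variance-adaptive) Hedge whose regret scales with the realized number of disagreements. I would bound that number via the preview estimate on a high-probability event.

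\textbf{Lower bound.} I would take the maximum of two constructions. The $\sqrt{dT}$ term comes from the standard agnostic lower bound: $d$ shattered points, each repeated $T/d$ times with independent fair-coin labels. The preview reveals little about the unrevealed coin flips, so the bound is $\Omega(\sqrt{dT})$ and the cap at $T$ is automatic. The $d/p$ term uses $d$ disjoint copies of thresholds. In each copy the adversary runs a binary-search realizable sequence of length about $1/p$. With constant probability (using $p\le 1/10$) the preview misses every point of that block, so the binary search forces $\Omega(1/p)$ mistakes, which is of order $\log$ of the block length only if points are not repeated. To fix this, I would instead repeat each search point $\Theta(1)$ times, so that a block is unrevealed with constant probability and the learner errs with probability $1/2$ on each stage. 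Summing over copies gives $\Omega(\min\{T, d/p\})$.
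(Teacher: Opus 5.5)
There is a genuine gap in the upper bound. Your lower bound is essentially the paper's two constructions. The aside about ``$\log$ of the block length'' is a confusion: a depth-$\Theta(1/p)$ Littlestone path for thresholds uses distinct points, each label a fresh fair bit, so it already forces $\Omega(1/p)$ mistakes and no repetition is needed.

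First, your ``naive version'' is not available. The unrevealed instances are not known in advance, so you cannot enumerate the projection of $\calH$ on $x_1,\dots,x_T$; that is the transductive model. A $p$-independent $\sqrt{dT\log T}$ bound would also contradict your own $d/p$ lower bound (thresholds with $p=1/T$). The real task is to commit, after the preview, to finitely many predictors that can be evaluated on unseen points, and to control how they behave on $\Real$.

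This is where the gap lies. Your covers consist of hypotheses chosen via preview distances, and you assert that preview disagreement rescaled by $1/p$ matches full-sequence disagreement up to additive $d/p$. The representative of $h^\star$ depends on the preview, so you need this uniformly over pairs. Uniformly it is false in general: it would make a random sample of size $pT$ an $O(d/(pT))$-net, contradicting the $\Omega(\eps^{-1}\log\eps^{-1})$ lower bounds for $\eps$-nets. The correct uniform floor is $O(d\log(pT/d)/p)$. That gives only the proper trace-cover rate, which the paper treats as a baseline, not the sharp $d/p$.

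The missing idea is an improper completion with a non-uniform, fixed-comparator guarantee. For each trace $a$ on a preview subset $I$, the paper uses the one-inclusion completion $Q_I(a,\cdot)$. The leave-one-out bound for the fixed $h^\star$ (\Cref{cor:oinclusion}) gives $\E\,\#\{t\notin I: Q(S_I^{h^\star},x_t)\ne h^\star(x_t)\}\le 2dT/|I|$, with no logarithm and no union bound. Along nested uniform subsamples $I_0\subset\cdots\subset I_K=\Sample$ with $|I_k|\approx d2^k$, the same bound controls the level-$k$ second-order term by $4dT/|I_{k-1}|$. The parent maps are just trace restrictions, and Sauer--Shelah on $|I_k|$ bounds the log-sizes. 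Together these make the chain sum to $O(\sqrt{dT})$.

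Second, ``telescoping the losses of a sampled chain'' does not combine the levels. Links sampled independently at different levels need not compose into a chain. And if you telescope $0/1$ losses linearly, the aggregate $R_t$ of the level outputs can leave $[0,1]$. There the linear surrogate $(1-2y_t)R_t+y_t$ falls below the loss of the clipped prediction; for example, with $y_t=1$ and $R_t>1$ the surrogate is negative while the clipped loss is $0$.

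The paper instead sums the fractional level predictions $\widehat y_{k,t}$, clips the sum, and feeds level $k$ the signed loss $s_t\,(q(x_t)-\pi_{k-1}(q)(x_t))$, where $s_t=\mathrm{sign}(\sum_k\widehat y_{k,t}-y_t)$. Convexity of $a\mapsto|\sum_k a_k-y_t|$ then bounds the total regret by the sum of the per-level \PROD regrets. This is why a second-order method that accepts losses in $[-1,1]$ is needed.
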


This bound has two terms. The $\sqrt{dT}$ term is the usual statistical fluctuation term and is unavoidable even for random labels on a shattered set. The $d/p$ term is the price of discovering enough of the adversarial sequence from the preview; threshold-tree lower bounds show that this term is necessary even when there is no label noise. Thus the rate is sharp up to constants for binary classification.  An additional feature of
the upper-bound algorithm is that it uses only the positions and features
of the preview examples, not their labels. Moreover, in the binary case the same rate also holds against the stronger preview-dependent comparator 
\(\min_{h\in H} L_{\mathrm{Real}}(h)\):  the full-sequence optimum and the Real-optimum differ by only \(O(\sqrt{dT})\) in expectation; see Remark~\ref{rem:real-comparator}.

%The dependence in \Cref{thm:binary-upper} is tight, by two standard constructions. The $\sqrt{dT}$ term is the usual statistical fluctuation term and is forced by random labels on a $d$-point shattered set. The $d/p$ term is the price of discovering enough of the adversarial sequence from the preview: the Littlestone threshold-tree construction leaves, on each coordinate, an expected $\Omega(1/p)$ unresolved suffix of a binary-search path \cite{Littlestone1988}. Thus the rate is sharp up to universal constants for binary classification. An additional feature of the binary upper-bound algorithm is that it uses only the positions and features of the preview examples, not their labels.

It is instructive to view the theorem as exhibiting a \emph{saturation threshold} in the usefulness of the preview.
Below the saturation point $p \approx \sqrt{d/T}$, in the range where the preview term is active, the regret is governed by $d/p$: increasing the preview rate by a factor correspondingly decreases the adversarial-order cost. Once
        $p\gtrsim \sqrt{d/T}$,
the term $d/p$ is dominated by $\sqrt{dT}$, and the regret reaches the statistical scale $O(\sqrt{dT})$. At this point the bottleneck is no longer the adversarial order but the ordinary agnostic fluctuation.

Thus \Cref{thm:binary-upper} gives a sharp interpolation between the adversarial and statistical regimes. For example, for constant $d$ and $p=T^{-1/2}$, our bound is $O(\sqrt T)$. A preview-only learner that trains one fixed classifier on the $pT$ revealed examples and ignores online labels instead gives $\widetilde O(\sqrt{dT/p})$, which is $\widetilde O(T^{3/4})$ for constant $d$ and $p=T^{-1/2}$, while a learner that ignores the preview may suffer regret governed by Littlestone dimension. Surprisingly, the binary upper-bound algorithm in \Cref{thm:binary-upper} uses only the positions and features of the preview examples, not their labels.

%It is instructive to examine the bound for different values of $p$. When $p\approx 1/T$, the learner observes only constantly many preview examples, so the setting is close to fully adversarial online learning. Once $p\gtrsim\sqrt{d/T}$, the term $d/p$ is dominated by $\sqrt{dT}$, and the regret becomes $O(\sqrt{dT})$; no preview can remove this ordinary agnostic/statistical fluctuation term. Thus \Cref{thm:binary-upper} gives a sharp interpolation between the adversarial and statistical regimes. For example, for constant $d$ and $p=T^{-1/2}$, our bound is $O(\sqrt T)$. A preview-only learner that trains one fixed classifier on the $pT$ revealed examples and ignores online labels instead gives $\widetilde O(\sqrt{dT/p})$, which is $\widetilde O(T^{3/4})$ for constant $d$ and $p=T^{-1/2}$, while a learner that ignores the preview may suffer regret governed by Littlestone dimension.

\IGNORE{\color{red} \snote{old text in red below}
\Cref{thm:binary-upper} shows that access to preview eliminates the dependence on the Littlestone dimension, which may be arbitrarily large, and replaces it with a dependence only on the VC dimension. In turn, considering the bound for stochastic, offline, learning, the above result provides tight regret bounds for the preview model when $p<1$. The term $\Omega(\sqrt{dT})$ is unavoidable even in the fully stochastic setting, while the additional $\Omega(d/p)$ term already appears in the realizable case, where the target hypothesis satisfies $L(h^\star)=0$. For $p$ that approaches to $1$, a learner can trivially achieve a regret of $|\Real|$, or using standard algorithm for learning in the stochastic case $\Theta\left(\sqrt{\frac{d}{T}}|\Real|\right)$.  

It is instructive to examine the bound for different values of the preview probability $p$. At one extreme, when $p\approx 1/T$, the learner observes only a constant number of samples, so the setting is essentially that of fully adversarial online learning. At the other extreme, once $p\gtrsim \sqrt{d/T}$, the term $d/p$ becomes dominated by $\sqrt{dT}$, and the regret is $O(\sqrt{dT})$. This threshold is unavoidable: preview  cannot overcome the fundamental statistical limitations of learning, since even with complete knowledge of the marginal distribution, one still requires $\Theta(\sqrt{dT})$ samples to attain this rate.\footnote{As discussed, there is an alleged advantage in taking $p=1$ but this advantage comes from reducing the size of $\Real$: the test set, hence we focus in our model on the case where $p$ is strictly smaller than $1$}.

A natural question is therefore whether every intermediate amount of preview information yields a corresponding improvement in regret. \Cref{thm:binary-upper} answers this question precisely by establishing the sharp interpolation.
For example, when $p=T^{-1/2}$, our bound is $O(d\sqrt T)$. By contrast, training one fixed classifier on the $pT$ preview examples and then ignoring the online labels gives the scale $\widetilde O(\sqrt{dT/p})=\widetilde O(\sqrt d\,T^{3/4})$.  Conversely, an online learner that ignores the preview  altogether may incur regret governed by the Littlestone dimension, which can be arbitrarily larger.

Finally, we note that several intermediate ingredients in our analysis already show that preview  can be exploited in certain parameter regimes. \Cref{thm:binary-upper}, however, provides the complete picture by identifying the exact tradeoff between the amount of preview information and the achievable regret.
}

We next outline the main result in the multiclass classification model, where the label space may be infinite. For a class $\calH$ let $d_{\rm DS}(\calH)$ and $d_{\rm Nat}(\calH)$ denote its DS and Natarajan dimensions, respectively. It is known  that $d_{\rm Nat}(\calH)\le d_{\rm DS}(\calH)$ always.

\begin{theorem}[Multiclass upper bound]\label{thm:multiclass}
Let $\calH\subseteq\calY^{\calX}$ have DS dimension $D=d_{\rm DS}(\calH)$ and Natarajan dimension $N=d_{\rm Nat}(\calH)$.  In the  preview model, there is an improper randomized learner $\calA$ such that, for every fixed labeled sequence with optimum
       $ h^\star\in\arg\min_{h\in\calH} L_{[T]}(h)$, we have
\[
        \bbE\big[L_{\Real}(\calA)-L_{\Real}(h^\star)\big]
        ~\le~
        \tO\!\left(\frac Dp+\sqrt{NT}\right),
\]
where the hidden factors are polylogarithmic in $D, T$, and $1/p$, but are independent of $|\calY|$.  
\end{theorem}

This dependence is tight up to polylogarithmic factors: the \(\sqrt{NT}\) term is forced by the binary lower bound on a Natarajan-shattered set, while the \(D/p\) term is forced by the standard DS/list-learning obstruction, or equivalently by planting \(D\) independent preview-discovery components whose relevant labels cannot be identified before they are sampled.

The multiclass result separates the two roles played by the preview. The DS term $D/p$ is the cost of using labeled preview examples to find the relevant labels in a possibly infinite label space. Once this local label space has been reduced, the online aggregation cost is controlled by the Natarajan dimension, giving the $\sqrt{NT}$ term. In particular, the bound has no dependence on $|\calY|$.

\IGNORE{\color{red}\snote{defer to discussion or remove?}
More broadly, our results show that the preview model is not tied to the binary setting. Although extending to multiclass classification requires substantially richer combinatorial notions and new algorithmic ingredients, the central ideas of our framework continue to apply: preview samples reduce the effective uncertainty faced by the learner, and the online labels are exploited to progressively refine the remaining hypotheses. This yields a unified perspective on how preview information can be incorporated into sequential learning across different prediction settings, with the appropriate complexity measures determining the resulting guarantees.
}

\subsection{Our Techniques}

A central feature of our approach is that it leverages the statistical nature of the preview  while simultaneously retaining the robustness of online learning algorithms, without requiring the entire data sequence to be stochastic. 
A natural baseline is to treat the preview as a uniform sample from the fixed sequence and apply standard statistical learning methods.
While this strategy can improve upon worst-case online learning guarantees, its performance is fundamentally limited by the statistical accuracy attainable from the preview  alone. 
For a binary class of VC dimension $d$, a preview-only learner has per-example excess on the order of $\widetilde O(\sqrt{d/(pT)})$, which corresponds to total online regret $\widetilde O(\sqrt{dT/p})$ over $T$ rounds.

This baseline misses the threshold behavior suggested by our main theorem.  The preview-only rate $\widetilde O(\sqrt{dT/p})$ reaches the statistical scale $\sqrt{dT}$ only when $p=\Theta(1)$. By contrast, our bound reaches this scale already when $p\gtrsim\sqrt{d/T}$.  Thus the main algorithmic task is not merely to use the preview as a larger training set; it is to use the preview to organize the remaining online learning problem so that the labels revealed online continue to
drive down uncertainty.

\paragraph{Covering the hypothesis class.}
A stronger baseline uses the preview to build a finite \emph{trace cover} of the hypothesis space. Related public-sample cover ideas appear in semi-private learning \cite{alon2019limits, beimel2013private}, where the learner is required to satisfy privacy while having access to a public sample, analogous in spirit to the public preview available here. The idea is to use the preview to choose a finite subset of hypotheses that \emph{approximate} the original class over the preview. 
Choosing one representative for each trace of $\calH$ on the preview gives a finite class of size controlled by Sauer--Shelah. The approximation error is the number of unrevealed indices on which two hypotheses can disagree despite agreeing on the preview; a VC empty-range argument bounds this by roughly $O(d\log(pT/d)/p)$.
Overall, this proper trace-cover approach gives
\[
        O\!\left(\frac{d\log(pT/d)}{p}
        +
        \sqrt{dT\log(pT/d)}\right),
\]
where the first term arises from the quality of the approximation, while the second term stems from the size of the cover bounded by Sauer--Shelah. 
Thus, even up to logarithmic factors, this already uses the preview and online feedback in a nontrivial way.  The sharp binary theorem then removes the two logarithmic losses: one-inclusion removes the logarithm in the approximation term, and sparse online chaining removes the log-size factor in the aggregation term.

%Overall, this approach yields a learning rate of 
%\[ O\left( \frac{d\log (pT/d)}{p} + \sqrt{dT\log (pT)}\right),\] 
%These techniques, however, leave a gap between the optimal achievable statistical rates and how to exploit the preview data. We next describe the two main ideas that improve upon this approach and lead to the optimal result.

\paragraph{Constructing sample-efficient improper learners.}
A limitation of the cover-based approach is that it replaces the original hypothesis class with a finite representative subset. Restricting the learner to such a finite subset is analogous to proper learning and is known to be inherently suboptimal in general \cite{bousquet2020proper}. Our first step is therefore to replace this approximation with an improper family of predictors constructed from the preview, for which we can guarantee a strictly better approximation. To this end, we apply the one-inclusion graph algorithm to every possible labeling of the preview induced by the original class and use the resulting predictors. This choice is particularly well suited to our setting because its leave-one-out guarantee on any finite indexed set averages, over the random preview, to an $O(d/p)$ approximation error on the unrevealed indices.

\paragraph{Algorithmic vs.~statistical chaining.}
Replacing the approximating set with an improper construction eliminates the first logarithmic factor.  However, to obtain the sharp rate we must also handle the second logarithmic term.  This term comes from the log-size of the predictor set, which is not changed merely by replacing proper representatives with improper one-inclusion completions.

The key idea in resolving this second challenge is to exploit the structure of the predictor class.  The predictors are constructed from preview subsamples of different sizes, and their quality improves as the subsample grows.  Thus one should not treat all predictors at the finest scale as unrelated experts.  This is analogous to statistical chaining, where one replaces a crude union bound by a multiscale argument that uses correlations between nearby hypotheses.  The difference is that, in our online setting, the multiscale refinement must be implemented by the algorithm itself rather than being used only in the analysis.

Concretely, a comparator induces a path of predictors
        $g_0,g_1,\ldots,g_K$,
where $g_k$ is the one-inclusion completion built from a level-$k$ subsample of the preview.  We decompose the final predictor as
\[
\textstyle        g_K
        =
        g_0+\sum_{k=1}^K(g_k-g_{k-1}).
\]
The learner runs one second-order experts algorithm for the base predictor $g_0$ and one for each correction $g_k-g_{k-1}$.  The aggregation cost of a correction is charged only on rounds where the level-$k$ prediction differs from its parent.  One-inclusion bounds the number of such changes, while Sauer--Shelah bounds the log-size of the expert class at each level.

Related ideas have previously appeared in the literature in partial-feedback settings or when further structure on the experts can be imposed \cite{cesa1999prediction,cesa2017algorithmic}.  In our setting, there is no ambient metric or topological structure on $\calX$ that the algorithm can rely on.  Instead, the hierarchy is created from the preview itself: each level uses a larger, more informative subsample, and each online subroutine learns how to correct the level below it.  This yields the \ChainedPrediction algorithm (see \Cref{alg:hier_prod}) and removes the log-size loss in the aggregation term.

\paragraph{From binary to multiclass.}
The preceding ideas suffice to obtain the optimal guarantees in the binary setting. Extending them to multiclass learning, however, introduces a new challenge. The hierarchy constructed above relies on predictors obtained from the preview, whose complexity grows with the number of possible labelings. If the label space is infinite, this dependence can be infinite even for simple classes, preventing a direct application of the binary construction.

To overcome this obstacle, we exploit recent advances in multiclass PAC and list learning. Rather than explicitly constructing predictors for every possible labeling, we first use labeled preview examples to build a short local menu of candidate labels, and then learn inside this menu using Natarajan dimension. This avoids any dependence on the ambient label-space size $|\calY|$ and replaces trace enumeration by a DS/Natarajan-controlled representation, yielding the desired DS/Natarajan bound up to logarithmic factors independent of $|\calY|$.

\subsection{Discussion}

The preview model keeps the online order adversarial while giving the learner scattered information about the future, and our bounds show that this is enough to replace sequential-complexity parameters by statistical-complexity parameters.
\IGNORE{We introduced the preview model as an intermediate model between the classical statistical and fully adversarial learning paradigms. In this model, the sequence of examples is chosen adversarially, yet the learner is given access to a random fraction of the examples before the online interaction begins. This setting retains the sequential and non-stochastic nature of online learning, while avoiding the full pessimism of the adversarial model, where many learning problems that are easy in the statistical setting become impossible.}
A recurring theme in our proofs is that statistical information alone is not enough; it must be converted into an online algorithmic object. In the offline setting, empirical risk minimization serves as a remarkably general algorithmic principle, and optimal statistical rates can often be obtained by combining ERM with classical empirical process techniques. In contrast, online learning typically requires the explicit design of learning algorithms, together with algorithmic principles such as regularization, stability, optimism, or potential-based updates. Our  preview model brings this distinction into sharp focus: although the preview provides sufficient statistical information to substantially reduce uncertainty, achieving optimal guarantees still requires exploiting the online labels algorithmically, rather than merely evaluating an offline solution.

%This perspective also motivates a discussion of the techniques underlying our results. At a high level, our algorithms combine statistical information extracted from the preview with inherently online algorithmic principles. While the preview allows the learner to substantially reduce the uncertainty it faces, exploiting this information optimally cannot be achieved by simply running an offline learner. Instead, the learner must continually incorporate the revealed online labels, leading to algorithms that are intrinsically sequential.

This distinction becomes particularly apparent when aiming for optimal rates. In statistical learning, chaining is often an analytic tool for replacing a crude union bound by a multiscale argument. The chaining itself plays no algorithmic role. In our setting, however, such arguments cannot be applied in the same manner. Since the guarantees depend on the learner's sequential predictions, the multiscale refinement must be reflected in the algorithm. Our learner aggregates corrections across levels, paying only when a refinement changes the prediction.
The sparse aggregation lemma may be useful beyond the present model. It gives a way to aggregate a hierarchy of predictors while paying for a refinement only on the rounds where it changes the parent prediction. Similar structures arise in privacy, public-sample learning, and structured expert problems; exploring other preview online problems is a natural direction.

%\paragraph{Computational efficiency.}
Our results are information-theoretic. For an arbitrary hypothesis class
$\calH$, the binary algorithm enumerates traces of $\calH$ on preview
subsamples and applies one-inclusion completions to the resulting named
predictors. We do not claim polynomial-time implementability without
additional structure or appropriate oracle access to $\calH$. The main
contribution is the preview information complexity and the online
chaining reduction.

\subsection{Further Related Work}

Several models mitigate the pessimism of fully adversarial online learning by imposing structure on the sequence. These include random-order models \cite{BernasconiCelliColiniBaldeschiFuscoLeonardiRusso2025,garber2020online}, predictable or constrained adversaries \cite{RakhlinSridharanTewari2015,rakhlin2013online}, stochastic data with adversarial corruptions \cite{amir2020prediction}, and smoothed or  easy-data models \cite{haghtalab2024smoothed,hazan2009stochastic,sani2014exploiting}. In contrast, our model keeps the served order adversarial and instead reveals a random preview of the same sequence before the online phase.
The preview assumption is closest to the ``with a sample'' line of work in online algorithms, including secretary and matching problems with a random sample \cite{KaplanNaoriRaz2020,KaplanNaoriRaz2022} and more general sample-augmented online algorithms \cite{ArgueFriezeGuptaSeiler2022,GKL-SODA24,GM-SODA26,GGPS-STOC26}. Our contribution is to bring this to online classification and identify the resulting statistical dimensions.

%Our work establishes a novel intermediate models between \emph{online} and \emph{offline} classification. Several intermediate models that try to mitigate the pessimism of the online setting, with the purely stochastic nature of the offline models have been investigated, such as for example the random order model \cite{BernasconiCelliColiniBaldeschiFuscoLeonardiRusso2025,garber2020online}. Alternative approach considers adversaries that are constrained \cite{RakhlinSridharanTewari2015}, by considering predictable sequence \cite{rakhlin2013online}, stochastic data that is adversarially corrupted \cite{amir2020prediction}, or smoothed analysis \cite{haghtalab2024smoothed} and other forms of ``easy data" \cite{hazan2009stochastic,sani2014exploiting}. In these settings, similar to our work, \cite{hazan2016online, koren2017affine} observes that while in the stochastic setting such deficiencies can be exploited obliviously through ERMs, achieving comparable guarantees in the online setting requires the learner to use an appropriately designed algorithm.

The first step of our binary proof, constructing approximating predictors from the preview, is related to public-sample and semi-private learning. In those settings, the learner has access to both a private sample and a public sample, and the public data can be used to construct a finite approximating family \cite{alon2019limits,beimel2013private}. This analogy is useful, but our goal is different: the approximating family is fed into an online learner, and the remaining labels arrive adversarially.

%The technique we exploit involves, first, the usage of approximating predictors. A similar solution appears in a problem of learning from \emph{private data}. There too, the data is divided into two types: one whose privacy is to be preserved and the other that is public. The solution turns out to exploit a similar idea \cite{alon2019limits, beimel2013private} and indeed the public data is used to construct an approximate set. Surprisingly, in both setups the usage of the further data helps avoid dependency on the Littlestone dimension which turns out to control not only online learning but also private learning \cite{alon2019private,alon2022private,bun2020equivalence}.

The improved bounds arise from an additional \emph{algorithmic chaining} technique. Similar ideas appeared previously but in substantially different framework and with different methods to perform the chaining. In \cite{cesa2017algorithmic}, the authors consider a partial feedback model in a topological space. The topological structure is inherent to the bandit problem, and the algorithm partitions the prediction space into neighborhoods, assigning an expert-advice learner to each neighborhood. In contrast, our setting lacks a natural topological structure. Instead, we construct a hierarchy of increasingly expressive hypothesis classes. Rather than treating the predictors from one level as experts for the next, each level introduces a new hypothesis class that is trained to correct the predictions produced by the preceding level. 
Cesa-Bianchi and Lugosi \cite{cesa1999prediction} use a related chaining idea for next-bit prediction with static experts. Our setting is different because the relevant predictors are generated from random previews at multiple scales, and the learner must aggregate their corrections online.

%\cite{cesa1999prediction} employs an approach which is more similar to ours, but the framework and setup differ substantially. In particular they consider a next-bit prediction task, and they perform the chaining on \emph{static experts} which in our setting leads to a sequence of predictions known in advance, which is analogue to $p=1$ in our setting that requires no additional algorithmic tools.

Another closely related relaxation is transductive online learning, where
the learner is given the entire unlabeled online sequence before prediction
begins \cite{KakadeK05,ChaseHMS25}. Also related is the prediction-augmented model \cite{RT-NeurIPS24}, in which the learner receives predictions about future examples. Our preview instead consists of exact labels on a uniformly random subset of the adversarial sequence, and the unrevealed future instances are not shown before they arrive.

\section{Model and Preliminaries}\label{sec:model}

We consider a fixed class $\calH\subseteq\calY^{\calX}$ of hypotheses over a fixed domain $\calX$ with label space $\calY$. For binary classification we will have $\calY=\{0,1\}$, and in the multiclass section we let $\calY$ be any set, possibly infinite.  

In our setting, we consider the following iterative game between an adversary and a learner. At the beginning of the game, an oblivious adversary chooses a sequence
\[
        ((x_1,y_1),\ldots,(x_T,y_T))\in(\calX\times\calY)^T.
\]
For any index set $I \subseteq [T]$, denote
\[ L_I(h) := \sum_{t\in I} \mathbf{1}\{h(x_t)\ne y_t\}.
\]
After the sequence is fixed, let
        $h^\star \in \arg\min_{h\in\calH} L_{[T]}(h)$
be chosen by a fixed tie-breaking rule independent of the preview.
Assume that $m=pT$ is an integer.  We choose $\Sample\subseteq[T]$ uniformly at random among all subsets of size $m$, and set $\Real=[T]\setminus\Sample$.
Then, the learner observes for every $t\in \Sample$ both $t$ and 
$(x_t,y_t)$. Now the iterative online game proceeds where in each iteration, the indices in $\Real$ arrive in the original order. On $t\in\Real$, the learner sees $x_t$, predicts a label $\hat y_t$, and then observes $y_t$.

If the learner $\calA$ is randomized, $L_{\Real}(\calA)$ denotes its realized number of mistakes on $\Real$. 
In the binary case, our algorithms
often output a number $r_t\in[0,1]$ and then predict $1$ with probability
$r_t$. Since conditional on the history,     $\bbE[{\bf 1}\{\hat y_t\ne y_t\}\mid r_t,y_t]        =
|r_t-y_t|$,  any bound on $\sum_{t\in\Real}|r_t-y_t|$ immediately implies the same bounds for the realized loss
$L_{\Real}(\calA)$.
The learner's objective is to minimize \emph{regret}:
\[
        \bbE\big[L_{\Real}(\calA)-L_{\Real}(h^\star)\big],
\]
where the expectation is over the random preview and the learner's internal randomness.

%\section{Preliminaries and Technical Background}
\subsection{\PROD algorithm for the expert advice setting}
Arguably, the most extensively studied setting in online learning is the \emph{expert advice} model. 
%For binary $\{0,1\}$ losses, this can be viewed as a special case of our framework in which $\mathcal H$ is an arbitrary, finite, class and no samples are observed initially. 
The seminal \emph{Multiplicative Weights} algorithm is known to achieve optimal guarantees in this setting and has become a standard tool for a wide range of subsequent generalizations \cite{AHK-ToC12}. In our framework, it is convenient to employ one of its variants, \PROD, introduced by \cite{CesaBianchiMansourStoltz2007}. \PROD provides a \emph{second-order} regret guarantee and can also accommodate negative losses, both advantages will be useful in our analysis.

In more details, in the expert advice setting, an arbitrary sequence of loss vectors $\ell_1,\ldots,\ell_T \in [-1,1]^N$ arrives sequentially. At each round $t$, the learner chooses a distribution $p_t \in \Delta_N$, then observes $\ell_t$ and incurs loss $p_t\cdot \ell_t$. The goal is to minimize the regret, defined as the difference between the learner's cumulative loss and that of the best expert in hindsight. The following guarantee can be achieved when $p_t$ are chosen via the \PROD algorithm:
\begin{lemma}[\cite{CesaBianchiMansourStoltz2007}]\label{lem:exp}
For every $0 < \eta \leq 1/2$ there exists an online algorithm that given a sequence of $T$  losses $\ell_{t}\in [-1,1]^N$ returns $p_t$, at each iteration $t$, such that for every $i^\star$:
\[
\sum_{t=1}^T \Big( p_t \cdot \ell_t - \ell_t(i^\star)\Big)
\le
\frac{\log N}{\eta}
+
\eta \sum_{t=1}^T \ell_t^2(i^\star).
\]
If the loss sequence is random, the same inequality holds conditionally on the realized sequence of losses, and expectations may be taken afterwards.
% \[
% \E\left[\sum_{t=1}^T \Big( p_t \cdot \ell_t - \ell_t(i^\star)\Big) \right] \le \frac{\log N}{\eta}+ \eta \sum_{t=1}^T\E\left[\ell^2_t(i^\star)\right]
% .\]
\end{lemma}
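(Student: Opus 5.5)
The plan is to analyze the \PROD algorithm of \cite{CesaBianchiMansourStoltz2007} directly through a multiplicative potential. Initialize $w_{1,i}=1$ for all $i\in[N]$. At round $t$ play $p_t=w_t/W_t$, where $W_t=\sum_i w_{t,i}$. After observing $\ell_t$, update
\[
w_{t+1,i}=w_{t,i}\,(1-\eta\,\ell_t(i)).
\]
Since $\eta\le 1/2$ and $\ell_t(i)\in[-1,1]$, we have $\eta\ell_t(i)\in[-1/2,1/2]$. Every factor therefore lies in $[1/2,3/2]$, so all weights stay strictly positive and $p_t$ is well defined. The proof compares an upper bound and a lower bound on $\log(W_{T+1}/W_1)$.

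\emph{Upper bound.} By definition of $p_t$,
\[
W_{t+1}=\sum_i w_{t,i}(1-\eta\ell_t(i))=W_t\,(1-\eta\, p_t\cdot\ell_t).
\]
Using $\log(1-x)\le -x$, this gives
\[
\log\frac{W_{T+1}}{W_1}\le -\eta\sum_{t=1}^T p_t\cdot\ell_t .
\]

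\emph{Lower bound.} Positivity of the weights gives $W_{T+1}\ge w_{T+1,i^\star}=\prod_t(1-\eta\ell_t(i^\star))$, and $W_1=N$. Hence
\[
\log\frac{W_{T+1}}{W_1}\ge -\log N+\sum_{t=1}^T\log\bigl(1-\eta\ell_t(i^\star)\bigr).
\]
To bound each term, I will use the elementary inequality $\log(1-x)\ge -x-x^2$ for $x\in[-1/2,1/2]$. To check it, set $f(x)=\log(1-x)+x+x^2$. Then $f(0)=0$ and
\[
f'(x)=\frac{x(1-2x)}{1-x},
\]
which is negative for $x<0$ and nonnegative on $[0,1/2]$. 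So $x=0$ is a global minimum on the interval and $f\ge0$ there. Applying this with $x=\eta\ell_t(i^\star)$ gives
\[
\log\frac{W_{T+1}}{W_1}\ge -\log N-\eta\sum_t\ell_t(i^\star)-\eta^2\sum_t\ell_t^2(i^\star).
\]

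Chaining the two bounds and dividing by $\eta>0$ yields exactly the claimed inequality. The only delicate point is this second-order logarithm inequality. It must hold for negative losses as well, which is where the restriction $\eta\le1/2$ is used. Everything else is bookkeeping.

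For the last sentence of the lemma, note that the algorithm is deterministic given the losses. The argument above is a pathwise inequality valid for every realized sequence $\ell_1,\dots,\ell_T$. This includes sequences that are random or that depend on past plays $p_1,\dots,p_{t-1}$, since nothing used independence. Hence the inequality holds conditionally on the realized losses, and one may take expectations of both sides afterwards.
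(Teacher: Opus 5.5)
Your proof is correct and is essentially the standard potential argument for \PROD in \cite{CesaBianchiMansourStoltz2007}, which the paper cites without reproving: you bound $\log(W_{T+1}/W_1)$ from above via $\log(1-x)\le -x$, and from below via the single weight $w_{T+1,i^\star}$ together with $\log(1-x)\ge -x-x^2$ on $[-1/2,1/2]$. Your pathwise remark also covers the paper's application, where $\ell^k_t$ depends on the current $p^k_t$ through $s_t$, because $p_t$ is itself determined by $\ell_1,\dots,\ell_{t-1}$.
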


\subsection{One Inclusion Graphs}
The second ingredient in our proof is the \emph{one-inclusion graph} algorithm \cite{haussler1994predicting}. In the standard PAC setting, this algorithm achieves the optimal order of expected prediction error. Unlike empirical risk minimization (ERM), which learns by selecting a hypothesis from the target class that minimizes the empirical error, the one-inclusion graph algorithm overcomes inherent limitations by allowing \emph{improper predictions}. Namely, its predictions need not be induced by hypotheses in the target class.

While other black-box optimal learners are known, the one-inclusion graph algorithm enjoys an additional guarantee in terms of the \emph{permutation mistake bound}, which is particularly well suited to our setting: Given an unlabeled sample $S$ and a hypothesis $h$, let $S_{[m]\backslash i}^{h}$ be the sample constructed by labeling according to $h$ and excluding example $i$, i.e.
\[S_{[m]\backslash i}^{h} := \{(x_1,h(x_1)),\ldots,(x_{i-1},h(x_{i-1})),(x_{i+1},h(x_{i+1})),\ldots, (x_m,h(x_m))\}.\]

Then, we have the following result
\begin{lemma}[\cite{haussler1994predicting}]\label{lem:oinclusion}
Let $\calH$ be a class with VC-dimension $d$. For any set of unlabeled points $S=\{x_1,\ldots,x_{m+1}\}$ and $h\in \calH$, let $Q(S^{h}_{[m+1]\backslash i},x_i)$ be the prediction of the one inclusion graph, on sample $x_i$ after observing sample $S_{[m+1]\backslash i}^h$. Then: 
\[ \frac{1}{m+1}\sum_{i=1}^{m+1}  \mathbf{1}\{Q(S^h_{[m+1]\backslash i},x_i)\ne h(x_i)\} ~\le~ \frac{2d}{m+1}.\]
\end{lemma}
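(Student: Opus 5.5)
The plan is the classical Haussler--Littlestone--Warmuth argument. We build the one-inclusion graph on the projections of $\calH$ to $S$, orient its edges so that every vertex has small out-degree, and let the orientation define the predictor $Q$. Then, for a fixed $h$, each leave-one-out mistake is charged to an out-edge of the vertex $h|_S$.

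\emph{Step 1: the graph.} Let $V=\{(h(x_1),\ldots,h(x_{m+1})) : h\in\calH\}\subseteq\{0,1\}^{m+1}$. Join $u,v\in V$ by an edge ``in direction $i$'' when they differ exactly in coordinate $i$. We can assume the $x_i$ are distinct; otherwise we merge duplicates, and a point that appears twice is never mispredicted.

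\emph{Step 2: the density bound, which is the main obstacle.} We show that every finite $V'\subseteq V$ spans at most $d|V'|$ edges. Note that $V'$ has VC dimension at most $d$ as a set system on $[m+1]$.
\begin{itemize}
\item We apply the down-shifting (compression) operators $\sigma_i$. Each $\sigma_i$ replaces a vector with a $1$ in coordinate $i$ by the same vector with that $1$ turned into $0$, whenever the result is not already present.
\item Standard checks show that $\sigma_i$ preserves $|V'|$, does not increase the VC dimension (shattered sets of $\sigma_i(V')$ are shattered by $V'$), and does not decrease the number of edges. The last point needs a small case analysis over pairs of edges inside each direction-$j$ ``square'' with the direction $i$.
\item After finitely many shifts the family is down-closed. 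In a down-closed family, every vertex $v$ has its downward neighbours $v-e_i$ for $i$ in the support of $v$. The support is a set contained in the family, hence shattered, hence of size at most $d$.
\item Counting each edge at its upper endpoint gives at most $d|V'|$ edges.
\end{itemize}

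\emph{Step 3: orientation.} Since every induced subgraph has at most $d$ times as many edges as vertices, Hall's theorem gives an orientation in which every vertex has out-degree at most $d$. Concretely, we match each edge to one of its endpoints with each vertex used at most $d$ times. Hall's condition holds because any set $F$ of edges spans a vertex set $U$ with $|F|\le d|U|$. Each edge is then oriented away from the endpoint it is matched to.

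\emph{Step 4: the prediction rule.} Given $S^{h}_{[m+1]\setminus i}$ and $x_i$, at most two vertices of $V$ agree with the observed labels off coordinate $i$.
\begin{itemize}
\item If there is exactly one such vertex $v$, predict $v_i$. Its projection is $h|_S$, so the prediction is correct.
\item If there are two, they form an edge in direction $i$, and we predict the $i$-th coordinate of its head.
\end{itemize}
For the true vertex $v=h|_S$, a mistake at index $i$ occurs only when $v$ has an edge in direction $i$ oriented away from $v$, i.e. an out-edge. So the number of mistakes over $i=1,\ldots,m+1$ is at most the out-degree of $v$, which is at most $d$. Dividing by $m+1$ gives the bound $d/(m+1)\le 2d/(m+1)$ stated in the lemma.
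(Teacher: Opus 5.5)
Your proof is correct. It is the classical Haussler--Littlestone--Warmuth argument: edge density at most $d$ on every subfamily (via down-shifting), an orientation with out-degree at most $d$ via Hall's theorem, and charging each leave-one-out mistake to a distinct out-edge of $h|_S$; the paper does not reprove this lemma but imports it by citation, and your argument gives the sharper bound $d/(m+1)$, so the factor $2$ in the stated lemma is slack.
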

The above guarantee is particularly appealing in our setting, especially as \Cref{lem:oinclusion} makes no assumptions on the distribution of $S$. We will employ the one-inclusion graph on refined observed sub-samples, like $\Sample$, throughout the algorithm. 

For $I\subseteq[T]$, let
        $\Pi_I(\calH):=\{(h(x_i))_{i\in I}:h\in\calH\}$
be the set of \emph{traces} of $\calH$ on $I$.
For a trace
$a\in\Pi_I(\calH)$, let $Q_I(a,\cdot)$ denote the one-inclusion completion
obtained as follows: for $t\notin I$, run the one-inclusion rule on the
finite indexed set $I\cup\{t\}$, after observing the labels $a$ on the
coordinates in $I$, and predict the hidden coordinate $t$. This prediction
depends only on the indexed sample, the observed trace $a$, and the fixed
one-inclusion orientation.

We write
\[
        Q_I(\calH):=\{Q_I(a,\cdot):a\in\Pi_I(\calH)\}
\]
as a named class indexed by traces. If two traces induce the same function
outside $I$, we still keep the two names distinct. Equivalently, for
$h\in\calH$ and $S_I^h=\{(x_i,h(x_i)):i\in I\}$, the predictor
$Q(S_I^h,\cdot)$ is the named predictor $Q_I(a_h,\cdot)$, where
$a_h=(h(x_i))_{i\in I}$.

Since $Q_I(\calH)$ is indexed by traces, $|Q_I(\calH)|=|\Pi_I(\calH)|$.
Hence, by the Sauer--Shelah lemma, for $|I|\ge d$:
\begin{equation}\label{eq:logQ}
|Q_I(\calH)| ~\le~ \sum_{i=0}^d {|I| \choose i} ~\le~ \left(\frac{e|I|}{d}\right)^d.
\end{equation}
Notice that the above bound on the size of the class already provides guarantee on its prediction error. Applying \Cref{lem:oinclusion} to our setting provides the following bound:
\begin{corollary}\label{cor:oinclusion} Let $\calH$ have VC dimension $d$.
    Let $x_1,\ldots, x_T$ be an arbitrary sequence, let $I$ be a uniformly random subset of $[T]$ of size $m$, and fix $h\in\calH$ independently of the choice of $I$. Then
    \[ \E_{I} \Big[\sum_{t\in [T]\backslash I} \mathbf{1}\Big[Q(S_I^h,x_t)\ne h(x_t)\Big] \Big]\le \frac{2 d T}{|I|}.\] 
 In particular, for every $h\in \calH$:
     \[ \E_{I} \Big[\min_{q\in Q_{I}(\cal H)}\Big[\sum_{t\in [T]\backslash I} \mathbf{1}\left[q(x_t) \ne h(x_t)\right] \Big]\Big]\le \frac{2 d T}{|I|}.\] 
\end{corollary}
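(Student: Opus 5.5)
The plan is a leave-one-out symmetrization: turn the random preview together with one unrevealed index into a uniformly random set of size $m+1$, and then apply \Cref{lem:oinclusion} to that set.

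Write $m=|I|$. For $t\notin I$, the completion $Q(S_I^h,x_t)$ is by definition the one-inclusion prediction on the indexed set $J=I\cup\{t\}$, after seeing the $h$-labels on $J\setminus\{t\}$. Hence
\[
\E_I\Big[\sum_{t\notin I}\mathbf{1}\{Q(S_I^h,x_t)\ne h(x_t)\}\Big]
=\sum_{|I|=m}\binom{T}{m}^{-1}\sum_{t\notin I}\mathbf{1}\{Q(S^h_{J\setminus t},x_t)\ne h(x_t)\}.
\]

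Re-index the double sum by the pair $(J,t)$ with $|J|=m+1$ and $t\in J$, using $I=J\setminus\{t\}$. The map $(I,t)\mapsto(J,t)$ is a bijection. The sum then becomes
\[
\binom{T}{m}^{-1}\sum_{|J|=m+1}\;\sum_{t\in J}\mathbf{1}\{Q(S^h_{J\setminus t},x_t)\ne h(x_t)\}.
\]
Since $h$ is fixed independently of $I$, and the one-inclusion orientation depends only on the indexed set $J$, \Cref{lem:oinclusion} applies to each $J$. It bounds the inner sum by $2d$. One point needs care: the lemma is stated for a set of points, but here $J$ is an indexed set that may contain repeated instances $x_t$. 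I would note that the one-inclusion graph argument works verbatim on indexed multisets. Repeated points only force equal labels, so the projected class still has VC dimension at most $d$ and the edge-density bound is unchanged.

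Summing over $J$, the expectation is at most
\[
2d\,\binom{T}{m+1}\Big/\binom{T}{m}=2d\,\frac{T-m}{m+1}\le\frac{2dT}{m}.
\]
This proves the first claim.

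For the second claim, fix $h$ and observe that $Q(S_I^h,\cdot)$ is the named predictor $Q_I(a_h,\cdot)$, which lies in $Q_I(\calH)$. The minimum over $q\in Q_I(\calH)$ is therefore pointwise, for every $I$, at most the quantity bounded above. Taking expectations gives the same $2dT/|I|$ bound.

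The only step requiring care is the bijective counting together with the applicability of \Cref{lem:oinclusion} to indexed sets with repeats. The rest is arithmetic.
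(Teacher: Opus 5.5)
Your proof is correct and is the argument the paper leaves implicit when it says the corollary follows by applying \Cref{lem:oinclusion} to a uniformly random preview. The bijection $(I,t)\mapsto(I\cup\{t\},t)$ turns the expectation into an average of one-inclusion leave-one-out sums over $(m+1)$-subsets, giving $2d\binom{T}{m+1}/\binom{T}{m}=2d(T-m)/(m+1)\le 2dT/m$; your checks that the orientation is fixed per indexed set $J$ and that repeated instances do not affect the bound are the details needed to make that one-line justification rigorous.
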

Notice that the expectation is taken only with respect to the random choice of $I$, and not over the sequence of examples. Moreover, the algorithm inherits the optimal guarantee of the one-inclusion graph algorithm while avoiding the unnecessary logarithmic factors that arise in more generic reductions.

\section{Binary Classification}\label{sec:binary}
We next present and prove our main result for the binary case, \Cref{thm:binary-upper}. There are two key ingredients that enable us to attain the optimal rates.
The first is to use the observed preview points to construct a finite class of predictors that contains a near-optimal benchmark. The online learning problem is then reduced to competing with the best predictor in this finite class. A natural approach is therefore to run an expert-advice algorithm over the resulting predictor class. However, while the class is optimal from a statistical perspective, its cardinality is on the order of $O(|\Sample|^d)$. Applying a standard expert-advice algorithm directly would therefore incur an aggregation term of order $\widetilde O(\sqrt{dT})$, but with an extra logarithmic factor coming from $\log |Q_{\Sample}(\calH)|=O(d\log|\Sample|)$. To achieve an improved rate, we invoke the algorithmic chaining technique. 

In this section we overview our main algorithm depicted in
\Cref{alg:hier_prod}. We then analyze and provide its regret guarantees in
\Cref{sec:alg}, prove the upper-bound part of \Cref{thm:binary-upper} in
\Cref{prf:binary-upper}, and prove the matching lower bound in
\Cref{sec:binary-lower}.

\subsection{The \ChainedPrediction Algorithm}\label{sec:alg}
We next depict the algorithm that achieves the regret bound guaranteed in \Cref{thm:binary-upper}. Before, we define the algorithm, we will need some further notations. The algorithm considers an increasing chain of index sets
\[I_0 \subset I_{1} \subset \cdots I_{K-1} \subset I_K  = \Sample.\]

In this section we regard $Q_I(\calH)$ as a named class indexed by traces: for every trace
$a\in\Pi_I(\calH)$ we keep one named predictor $Q(a,\cdot)$.  If two traces induce the same
function, we still keep the two names distinct.  For $k=1,\ldots,K$, define
\[
        \pi_{k-1}:Q_{I_k}(\calH)\to Q_{I_{k-1}}(\calH)
\]
by restricting the trace: if $q=Q(a,\cdot)$ for a trace $a\in\Pi_{I_k}(\calH)$, then
\begin{equation}\label{eq:pik}
        \pi_{k-1}(q):=Q(a|_{I_{k-1}},\cdot).
\end{equation}
For a terminal predictor $q\in Q_{I_K}(\calH)$ we also write $\pi_K(q)=q$. More generally, for a terminal predictor $q\in Q_{I_K}(\calH)$, we write $\pi_k(q)$ for its level-$k$ ancestor obtained by repeatedly applying the parent maps. Thus $\pi_{k-1}(\pi_k(q))=\pi_{k-1}(q)$.
In other words, $\pi_{k-1}(q)$ is obtained by forgetting the labels of the trace outside $I_{k-1}$.  Thus if a terminal predictor is induced by a fixed hypothesis $h$, all of its ancestors are induced by the same fixed $h$ on the coarser samples.

% {\color{red}
% Given any such chain,  define for $k=1,\ldots,K$,
% \[
%         \pi_{k-1}:Q_{I_k}(\calH)\to Q_{I_{k-1}}(\calH).
% \]
% Given $q\in Q_{I_k}(\calH)$, choose any $h_q\in\calH$ such that
% $q=Q(S_{I_k}^{h_q},\cdot)$, and set
% \begin{equation}\label{eq:pik}
%         \pi_{k-1}(q):=Q(S_{I_{k-1}}^{h_q},\cdot).
% \end{equation}
% For a terminal predictor $q\in Q_{I_K}(\calH)$ we also write $\pi_K(q)=q$.

% In other words, $\pi_{k-1}(q)$ is the predictor obtained from the same
% underlying hypothesis $h$, but using the cruder sample indexed by
% $I_{k-1}$ rather than the larger sample indexed by $I_{k}$.
% Consequently, $\pi_k$ maps each predictor at level $k+1$ to its
% natural ancestor at level $k$.
% }

We are now ready to introduce the \ChainedPrediction algorithm (\Cref{alg:hier_prod}). At a high level, the   algorithm constructs a hierarchy of expert classes indexed by increasingly refined subsamples. Starting from a coarse subsample, each subsequent level is obtained by revealing additional sample points and considering the corresponding one-inclusion graph. As a result, the expert class at level $k$ is strictly more informative than the one at level $k-1$.
The prediction process mirrors this hierarchy. The lowest level provides a coarse prediction based on limited information, while each higher level is viewed as playing an expert-advice game whose goal is to correct the residual error of the level below it.

\begin{center}
\fbox{
\begin{minipage}{0.92\linewidth}
\textbf{Chained prediction idea.}
A comparator $h$ induces a path of one-inclusion completions
\[
        g_0(h),g_1(h),\ldots,g_K(h),
\]
where level $k$ uses a larger subset of the preview.  The final prediction decomposes as
\[
        g_K ~=~ g_0+\sum_{k=1}^K \Big(g_k-g_{k-1} \Big).
\]
We run one second-order experts algorithm for the base prediction $g_0$ and one for each increment
$g_k-g_{k-1}$.  The increment at level $k$ is charged only on rounds where $g_k$ changes the
prediction of $g_{k-1}$.  One-inclusion bounds the expected number of such changes, and
Sauer--Shelah bounds the log-size of the expert class at that level.
\end{minipage}}
\end{center}

The \ChainedPrediction algorithm has the following regret guarantee:

\begin{lemma}\label{lem:hier_prod_regret}
Let $(x_1,y_1),\ldots, (x_T,y_T)$ be an arbitrary sequence, $\calH$ an hypothesis class with VC dimension $d$, and $\Sample\subseteq [T]$ a subset of indices of size $pT$, chosen uniformly at random. Apply \ChainedPrediction (\Cref{alg:hier_prod}) on the sequence $\Real$ after observing $\Sample$. 
Then for every fixed $h\in\calH$, independent of the preview, and for
$q=Q(S_{\Sample}^h,\cdot)$:
\[
\E\Big[
\sum_{t\in\Real}|r_t-y_t| - \sum_{t\in\Real}|q(x_t)-y_t|
\Big] 
%\E\left[\sum_{t\in \Real} \mathbf{1}|\hat y_t \ne y_t| - \sum_{t\in \Real}\mathbf{1}|q(x_t)\ne y_t|\right]
\le 2\sqrt{dT \log (\beta_0 T)} + 4\sum_{k=1}^K \sqrt{\frac{4ed\log (\beta_k T)}{\beta_{k-1}}} + 5d\sum_{k=0}^K \log (\beta_k T),\]
where the expectation is over the random preview and the algorithm's internal randomness.
\end{lemma}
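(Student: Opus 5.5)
The plan is a telescoping argument. The learner's prediction is a sum of $K+1$ independent \PROD instances, one at the base level and one per refinement level. The comparator $q=Q(S_{\Sample}^h,\cdot)$ is decomposed along its chain of ancestors $g_k:=\pi_k(q)=Q(S^h_{I_k},\cdot)$, which are all induced by the same fixed $h$.

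I take the chain to be nested uniformly random subsets with $|I_k|=\beta_k T$ and $\beta_K=p$. Concretely, order $\Sample$ uniformly at random and let $I_k$ be a prefix. Then each $I_k$ is marginally a uniformly random subset of $[T]$, and it is independent of $h$.

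\textbf{Linearization.} For binary $y$ and any $r\in[0,1]$ we have $|r-y|=r(1-2y)+y$. This gives the exact identity
\[
|g_K(x_t)-y_t| = |g_0(x_t)-y_t| + \sum_{k=1}^K \big(g_k(x_t)-g_{k-1}(x_t)\big)(1-2y_t).
\]
I set up the instances as follows.
\begin{itemize}
\item Level $0$ runs \PROD over the experts $Q_{I_0}(\calH)$ with losses $\ell^0_t(g)=|g(x_t)-y_t|\in[0,1]$.
\item Each level $k\ge1$ runs \PROD over the experts $Q_{I_k}(\calH)$ with losses $\ell^k_t(g')=(g'(x_t)-\pi_{k-1}(g')(x_t))(1-2y_t)\in\{-1,0,1\}$. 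These may be negative, which \PROD allows.
\end{itemize}
The learner outputs $r_t=\clip(\sum_k \bar\ell$-weighted predictions$)$. Precisely, $r_t$ is the clip to $[0,1]$ of $\sum_g p^0_t(g)g(x_t)+\sum_{k\ge1}\sum_{g'}p^k_t(g')(g'(x_t)-\pi_{k-1}(g')(x_t))$. The map $r\mapsto r(1-2y)+y$ is linear and agrees with $|r-y|$ on $[0,1]$. Outside $[0,1]$ it is at least $|\clip(r)-y|$. Hence $|r_t-y_t|$ is at most the sum over levels of the \PROD mixture losses plus $y_t$. Subtracting the identity above, the total regret is at most the sum of the $K+1$ \PROD regrets against the experts $g_0,\dots,g_K$.

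\textbf{Bounding each level.} I apply \Cref{lem:exp} pathwise, conditionally on the preview, and then take expectations.
\begin{itemize}
\item At level $0$ the squared losses are at most $1$ on at most $T$ rounds. By \eqref{eq:logQ}, $\log|Q_{I_0}(\calH)|\le d\log(e\beta_0T/d)\lesssim d\log(\beta_0T)$. Tuning $\eta_0\approx\sqrt{d\log(\beta_0T)/T}$ gives $2\sqrt{dT\log(\beta_0T)}$, or a $d\log$ term when the cap $\eta\le1/2$ binds.
\item At level $k\ge1$ the second-order term is $\eta_k\cdot\#\{t\in\Real: g_k(x_t)\ne g_{k-1}(x_t)\}$. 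By the triangle inequality through $h$ and \Cref{cor:oinclusion} applied to $I_k$ and $I_{k-1}$ (using $\Real\subseteq[T]\setminus I_k\subseteq[T]\setminus I_{k-1}$), its expectation is at most $2d/\beta_k+2d/\beta_{k-1}\le 4d/\beta_{k-1}$.
\end{itemize}
At level $k$ I choose the deterministic $\eta_k=\min\{1/2,\sqrt{\beta_{k-1}\log|Q_{I_k}|/(4d)}\}$, using the Sauer--Shelah bound in place of the log-cardinality. This yields $\sqrt{16 d\log N_k/\beta_{k-1}}$-type terms, i.e.\ the $4\sqrt{4ed\log(\beta_kT)/\beta_{k-1}}$ of the statement. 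When the cap binds, the cost is $2\log N_k\le$ the $5d\log(\beta_kT)$ terms. Summing over levels gives the claim.

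\textbf{Main obstacle.} I expect the delicate point to be the justification that expectations commute with the \PROD bounds. The expert classes, the parent maps, and the comparator path all depend on the random preview. However, \Cref{lem:exp} holds for every realized loss sequence and every fixed comparator index, so conditioning on $\Sample$ and the chain is legitimate. The learning rates are fixed in advance, since they use only the Sauer--Shelah bound, and the only random quantity entering the bound is the disagreement count, whose expectation is controlled by \Cref{cor:oinclusion}. A secondary check is that each $I_k$ must be marginally uniform and independent of $h$ so that \Cref{cor:oinclusion} applies at every level. This is exactly why the chain is built from a random ordering of $\Sample$.
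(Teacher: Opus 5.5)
There is a genuine gap, and it sits exactly where your losses depart from \Cref{alg:hier_prod}: you replace the algorithm's sign $s_t$ by the fixed sign $1-2y_t$. Your claim that $r\mapsto r(1-2y)+y$ is at least $|\clip(r)-y|$ outside $[0,1]$ is false when the unclipped aggregate $z_t=\sum_k\widehat y_{k,t}$ overshoots past the label. For $y_t=1$ and $z_t>1$ the surrogate is $1-z_t<0=|r_t-y_t|$. For $y_t=0$ and $z_t<0$ it is $z_t<0$. On such rounds the true loss exceeds the surrogate by $\mathrm{dist}(z_t,[0,1])$, which can be as large as $K$, and nothing in your argument bounds the total.

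This is not a technicality. With a fixed sign, each level's loss sequence is independent of the other levels, so every level pushes toward $y_t$ on its own and the pushes add. For instance, let all $y_t=1$ and $K=1$. At level $0$ take $a$ (correct only on the first half) and $c\equiv 0$. At level $1$ take the comparator $q\equiv 1$ with parent $a$, and $b$ with parent $c$ that is correct exactly on the first $3T/4$ rounds. Level $0$ concentrates on $a$ and level $1$ on $b$ (surrogate total $-3T/4$ versus $-T/2$ for $q$). So $z_t=2,1,0$ on the three blocks, and the learner loses about $T/4$ while $q$ loses nothing. This happens even though every per-level \PROD guarantee and your chain identity hold.

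The missing idea is to couple the levels through $s_t=\mathrm{sign}(z_t-y_t)$, which is a subgradient of $z\mapsto|z-y_t|$ at $z_t$. Clipping onto $[0,1]\ni y_t$ gives $|r_t-y_t|\le|z_t-y_t|$, and convexity then gives $|r_t-y_t|-|q(x_t)-y_t|\le s_t\big(\widehat y_{0,t}-\pi_0(q)(x_t)\big)+\sum_{k=1}^K s_t\big(\widehat y_{k,t}-\pi_k(q)(x_t)+\pi_{k-1}(q)(x_t)\big)$. This is exactly the sum of the per-level \PROD regrets with losses $s_t\cdot(\cdot)$. In the example, $s_t=+1$ on the overshooting first half penalizes $b$.

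\Cref{lem:exp} still applies even though $s_t$ depends on the current weights: the weights of \PROD are a function of past losses only, and its bound holds for every loss sequence. Also, $|s_t|\le 1$ leaves your second-order bound intact. With this repair, the rest of your plan (nested uniform chain, Sauer--Shelah, the triangle inequality through $h$ with \Cref{cor:oinclusion}) is the paper's proof. One further mismatch: your normalization $|I_k|=\beta_kT$ differs from the algorithm's $e|I_k|/d=\beta_kT$, so your level-$k$ terms match the displayed bound only after this reparametrization.
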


%\snote{change LHS to $\hat y_t \neq y_t$}

\begin{algorithm}
\caption{\ChainedPrediction}\label{alg:hier_prod}

\begin{algorithmic}[1]
\Require Sample indices $\Sample\subseteq[T]$ of size $pT$ from sequence $x_1,\ldots,x_T$, real indices $\Real=[T]\setminus\Sample$, fractions $\frac{e}{T}\le \beta_0\le  \ldots \le  \beta_{K-1}\le \beta_{K}= \frac{ep}{d}$
\State Set $I_{K}=\Sample$
\For{$k= K-1,\ldots,0$}
    \State Draw $I_k\subset I_{k+1}$ uniformly so that $\frac{e|I_k|}{d}=\beta_k T$.
    \State Let $\pi_{k}$ be as in \Cref{eq:pik}.
\EndFor
\State Initialize uniform distributions $p^k_1$ over $Q_{I_{k}}(\calH)$ for each $k=0,\ldots,K$

\For{each arriving $t\in\Real$ in the original order}
        \State Define prediction for each $0<k\le K$:
        \[
        \widehat y_{k,t}
        \gets
        \sum_{q\in Q_{I_k}(\calH)}
        p^k_t(q)\Big(q(x_t)-\pi_{k-1}(q)(x_t)\Big)
        \]
        \State And for $k=0$:
        \[
        \widehat y_{0,t}
        \gets
        \sum_{q\in Q_{I_0}(\calH)}
        p^0_t(q)q(x_t)
        \]
    \State Set
\[
        r_t=\clip_{[0,1]}\!\left(\sum_{k=0}^K \widehat y_{k,t}\right).
\]
\State Prediction $\hat y_t$ equals $1$ with probability $r_t$ and is $0$ otherwise.
    % \State Aggregate and predict:
    % \[
    % \widehat y_t
    % \propto
    % \clip\!\left(\sum_{k=0}^K \widehat y_{k,t}\right) ,
    % \]
    \State Receive feedback $y_t$
    \State Define signal $s_t$:
    \[
    s_t
    \gets
      \mathrm{sign}\left(
    \sum_{k=0}^K \widehat y_{k,t}-y_t
    \right)
    \]

    \For{$k=1,\ldots,K$} \Comment{Set hierarchical losses}
%    \For{$k=1,\ldots,K$} \% Set hierarchical losses
        \State Define for each $q\in Q_{I_{k}}(\calH)$:  
            \[
            \ell^k_t(q)
            \gets
            s_t\cdot  \left(
            q(x_t)-\pi_{k-1}(q)(x_t)
            \right)\in \{-1,0,1\}
            \]
            \State Update $p^k_{t+1}$ from $p^k_t$ using the Prod update with losses $\ell^k_t$ and
            \[ \eta_k = \min\left\{1/2, \sqrt{\frac{d\beta_{k-1} \log (\beta_k T)}{4e}}\right\}.\]
        \EndFor
    \State Define for each $q\in Q_{I_0}(\calH)$  
        \[\ell^0_t(q)
        \gets s_t q(x_t)
        \]
        \State Update $p^0_{t+1}$ from $p^0_t$ using the Prod update with losses $\ell^0_t$ with 
        \[ \eta_0 = \min\left\{1/2,\sqrt{\frac{d \log (\beta_0 T)}{T}}\right\}.\]
    \EndFor
\end{algorithmic}
\end{algorithm}
\begin{proof}
Since $y_t\in\{0,1\}$, projection onto $[0,1]$ cannot increase distance to $y_t$, we have     $|r_t-y_t|
        \le
        \left|\sum_k \widehat y_{k,t}-y_t\right|$.
This implies
\begin{align*}
\Reg
:=
\sum_{t\in\Real}|r_t-y_t|
-
\sum_{t\in\Real}|q(x_t)-y_t| 
~\le~ \sum_{t\in\Real} \left|\sum_{k=0}^K \hat y_{k,t}-y_t\right| - \sum_{t\in\Real} |q(x_t)-y_t|.
\end{align*}

From telescoping and the fact that $\pi_{K}(q)=q$, this implies
\[
\Reg \leq \sum_{t\in\Real} \left|\sum_{k=0}^K \hat y_{k,t}-y_t\right| - \sum_{t\in\Real} \left|\pi_0(q)(x_t) + \sum_{k=1}^K \Big(\pi_{k}(q)(x_t)-\pi_{k-1}(q)(x_t) \Big) -y_t\right|
.
\]

Next we claim that for every two vectors $a,b\in \mathbb{R}^{K+1}$ and $y$, the following inequality holds 
\begin{equation}\label{eq:ellconvex}
\left|\sum_{k=0}^K a_k-y\right|-\left|\sum_{k=0}^K b_k - y\right| \le \mathrm{sign}\left(\sum_{k=0}^K a_k-y\right)\sum_{k=0}^K  (a_k-b_k).
\end{equation}
Indeed, the inequality is immediate by observing that $\ell(\vec{x})=\left|\sum_{k=0}^K x_k-y\right|$ is a convex function, hence the inequality $\ell(a)-\ell(b)\le \partial \ell(a)^\top(a-b)$ holds, and
the result follows by taking any subgradient of the absolute-value function at the point $\sum_k a_k-y$.

Applying \Cref{eq:ellconvex} with $a= (\hat y_{0,t}, \hat y_{1,t},\ldots, \hat y_{K,t})$ and\\ $b=\left(\pi_0(q)(x_t),\pi_1(q)(x_t)-\pi_0(q)(x_t),\ldots, \pi_K(q)(x_t)-\pi_{K-1}(q)(x_t)\right)$, we obtain by definition of $s_t$:
\begin{align*} \Reg&\le \sum_{t\in\Real}\left(s_t\cdot\big[ \hat y_{0,t}-\pi_0(q)(x_t) \big] + \sum_{k=1}^K s_t\cdot \big[ \hat{y}_{k,t}- (\pi_{k}(q)(x_t)-\pi_{k-1}(q)(x_t)) \big] \right)\\
&= \sum_{t\in\Real}\left(s_t\cdot \hat y_{0,t}-\ell^0_t(\pi_0(q))  + \sum_{k=1}^K s_t\cdot \hat{y}_{k,t}- \ell_t^k(\pi_{k}(q)) \right)  \qquad \textrm{ using } \pi_{k-1}(\pi_k(q))=\pi_{k-1}(q).
\end{align*}
Next, we note that by definition of $\ell_t^0$: 
\[s_t\hat y_{0,t} ~=~ s_t\sum_{q\in Q_{I_0}(\calH)}p^0_t(q) q(x_t) ~=~ \sum_{q\in Q_{I_0}(\calH)}p^0_t(q) (s_t q(x_t)) ~=~ p^0_t\cdot \ell_t^0,\]
and similarly
$s_t \hat y_{k,t} = p_t^k\cdot \ell_t^k.$ 
Hence,
\begin{align*}
\Reg 
~\le~ \sum_{t\in\Real}\left( p_{t}^{0}\cdot \ell_t^0-\ell_t^0(\pi_0(q))+\sum_{k=1}^K \big[ p_t^{k}\cdot \ell_t^k - \ell_t^k(\pi_{k}(q)) \big]\right)
%&= \sum_{t}\left( p_{t}^{0}\cdot \ell_t^0-\ell^0_t(\pi_0(q))\right)+\sum_{k=1}^K \sum_{t} \big[p_t^{k}\cdot \ell_t^k - \ell_t^k(\pi_{k}(q)) \big]\\
~=~\sum_{k=0}^K \sum_{t\in\Real} \big[ p_t^{k}\cdot \ell_t^k - \ell_t^k(\pi_{k}(q)) \big] \enspace.
\end{align*}
We obtain that the algorithm's regret is the cumulative sum of the $K+1$ regrets obtained by each algorithm that observes losses $\ell_t^{k}$. Since each level $p_t^k$ is updated according to the \PROD algorithm we get via \Cref{lem:exp}, for $k\ge 0$:
\begin{align*}\E\left[\sum_{t\in\Real}\left( p_{t}^{k}\cdot \ell_t^k-\ell^k_t(\pi_k(q))\right)\right]
&\le \frac{\log |Q_{I_k}(\cal H)|}{\eta_k} + \eta_k \E\left[\sum_{t\in\Real} \left(\ell^{k}_t(\pi_k(q))\right)^2\right]\\
&\le \frac{d\log (\beta_k T)}{\eta_k} + \eta_k \E\left[\sum_{t\in\Real} \left(\ell^{k}_t(\pi_k(q))\right)^2\right]&\textrm{\Cref{eq:logQ}} \labelthis{eq:kregret}
\end{align*}
Applying the above bound for $k=0$ and using $|\ell_t^0|\le 1$, we have
\begin{align*}
\E\Big[\sum_{t\in\Real}\left( p_{t}^{0}\cdot \ell_t^0-\ell^0_t(\pi_0(q))\right)\Big]
~\le~ \frac{d\log (\beta_0 T)}{\eta_0} +\eta_0 T ~\le~ 2\sqrt{dT\log(\beta_0 T)}+4d\log(\beta_0 T).
\end{align*}

For $k>0$ we improve the bound on the second-order term. Since $q$ is the terminal predictor induced by the fixed hypothesis $h$, and the parent maps restrict traces, we have
\[
        \pi_k(q)=Q(S_{I_k}^{h},\cdot),
        \qquad
        \pi_{k-1}(q)=Q(S_{I_{k-1}}^{h},\cdot).
\]
Hence,
%For $k>0$ we improve the bound on the second order term. Set $h_q\in \calH$ to be such that $\pi_k(q)= Q(S^{h_q}_{I_k},\cdot)$, then by definition we have that $\pi_{k-1}(q)= Q(S^{h_{q}}_{I_{k-1}},\cdot)$, hence
%
\begin{align*}
\E\left[\sum_{t\in \Real}\left(\ell^{k}_t(\pi_k(q))\right)^2\right]
&\le
\E\left[\sum_{t\in \Real}
\mathbf{1}\{\pi_k(q)(x_t)\ne \pi_{k-1}(q)(x_t)\}
\right]\\
&\le
\E\left[\sum_{t\in \Real}
\mathbf{1}\{\pi_k(q)(x_t)\ne h(x_t)\}
+
\mathbf{1}\{\pi_{k-1}(q)(x_t)\ne h(x_t)\}
\right].
\end{align*}
Because $\Sample$ is uniform and each $I_k$ is drawn uniformly from $I_{k+1}$, every $I_k$ is marginally a uniformly random subset of $[T]$ of its prescribed size. Also, since $I_k\subseteq \Sample$, we have $\Real\subseteq[T]\setminus I_k$. Therefore \Cref{cor:oinclusion} gives
\begin{align*}
\E\left[\sum_{t\in \Real}\left(\ell^{k}_t(\pi_k(q))\right)^2\right]
&\le \frac{2 dT}{|I_{k}|}+ \frac{2 dT}{|I_{k-1}|} \\
&\le \frac{4 dT}{|I_{k-1}|} \quad = \quad \frac{4e}{\beta_{k-1}} \qquad \textrm{ using }\beta_{k-1} = \frac{e|I_{k-1}|}{dT}.
\end{align*}

Plugging the inequality in \Cref{eq:kregret}, and considering our choice of $\eta_k$ yields the result. As in the $k=0$ case, if the square-root learning rate term in the definition of $\eta_k$ exceeds $1/2$, then the same bound holds after increasing constants, because this can only happen when the log-size term already dominates the variance term.
\end{proof}

\subsection{Proof of the upper bound in \Cref{thm:binary-upper}}\label{prf:binary-upper}
%\subsection{Proof of \Cref{thm:binary-upper}}\label{prf:binary-upper}

We are now ready to prove the main result of the Binary classification case. 
If $d/p\ge T$, the trivial bound $L_{\Real}(\calA)\le T$ proves the theorem. Hence assume $pT\ge d$. For readability, assume $pT/d$ is a power of two and set $K=\log_2(pT/d)$; rounding the levels changes only constants. We run \Cref{alg:hier_prod} with
\[
        \beta_k=\frac{e2^k}{T},
        \qquad k=0,\ldots,K.
\]

First, apply \Cref{cor:oinclusion} with the fixed comparator $h^\star$. Since $|\Sample|=pT$, let $q=Q(S_{\Sample}^{h^\star},\cdot)\in Q_{\Sample}(\calH)$. Since $q$ is binary-valued, $|q(x_t)-y_t|=\mathbf{1}\{q(x_t)\ne y_t\}$. Hence,
\[
\E\Big[\sum_{t\in \Real}\mathbf{1}\{q(x_t)\ne h^\star(x_t)\}\Big]
    \le O(d/p).
\]
Therefore,
\[
\E\Big[\sum_{t\in \Real}\mathbf{1}\{q(x_t)\ne y_t\}\Big]
\le
\E L_{\Real}(h^\star)+O(d/p).
\]

Next, since the algorithm predicts $1$ with conditional probability $r_t$, we have by \Cref{lem:hier_prod_regret} that
\begin{align*}
\E\Big[
L_{\Real}(\calA)-\sum_{t\in\Real}|q(x_t)-y_t|
\Big] 
%Next, by \Cref{lem:hier_prod_regret} as our algorithm predicts $\hat y_t$ \begin{align*} \E\left[\sum_{t\in \Real}\mathbf{1}[\hat y_t\ne y_t]- \mathbf{1}[q(x_t)\ne y_t]\right]
&\le  2\sqrt{dT \log (\beta_0 T)} + 4\sum_{k=1}^K \sqrt{\frac{4ed\log (\beta_k T)}{\beta_{k-1}}} + 5d\sum_{k=0}^K \log (\beta_k T)\\
& \le 4\sqrt{dT} +4\sum_{k=1}^K \sqrt{T\frac{4d(k+2)}{2^{k}}}+10dK^2
 ~=~ O\left(\sqrt{dT}\right)+10dK^2.
\end{align*}
Notice that
\[ 10 dK^2 ~\le~ 10d\log^2 \left(\frac{epT}{d}\right)~\le~ 10 d\log^2\frac{eT}{d} ~=~ O\left(d\sqrt{\frac{T}{d}}\right) ~=~ O(\sqrt{dT}),\]
where we used the inequality $\log^2(er)\le C\sqrt r$ for $r\ge1$, with $r=T/d$.
Adding these two error terms gives the final result.

\begin{remark}[Comparator on the unrevealed sequence]\label{rem:real-comparator}
The binary upper bound also implies the same rate against the stronger comparator
\(\min_{h\in H}L_{\mathrm{Real}}(h)\). Let
\(h_T \in \arg\min_{h\in H}L_{[T]}(h)\) and
\(h_R \in \arg\min_{h\in H}L_{\mathrm{Real}}(h)\). For the binary loss class
$\mathcal L_H=\{t\mapsto \mathbf 1[h(x_t)\neq y_t]:h\in H\}$,
which has VC dimension at most \(d\), standard transductive VC uniform convergence gives
\[
\mathbb E_{\mathrm{Sample}}\sup_{h\in H}
\left|L_{\mathrm{Real}}(h)-(1-p)L_{[T]}(h)\right|
    ~\le~ C\sqrt{dT}.
\]
Therefore,
\[
\mathbb E\!\left[L_{\mathrm{Real}}(h_T)-L_{\mathrm{Real}}(h_R)\right]
~\le~
2\,\mathbb E\sup_{h\in H}
\left|L_{\mathrm{Real}}(h)-(1-p)L_{[T]}(h)\right| ~\le~ C'\sqrt{dT}.
\]
Combining this with \Cref{thm:binary-upper} yields
\[
\mathbb E\!\left[
L_{\mathrm{Real}}(A)-\min_{h\in H}L_{\mathrm{Real}}(h)
\right]
\le
C''\min\left\{T,\frac d p+\sqrt{dT}\right\}.
\]
The matching lower bound of \Cref{thm:binary-upper} also applies to this stronger comparator, since
\(\min_{h\in H}L_{\mathrm{Real}}(h)\le L_{\mathrm{Real}}(h^\star)\) pointwise.
\end{remark}

\subsection{Binary lower bound in \Cref{thm:binary-upper}}\label{sec:binary-lower}
We give the standard two-construction lower bound. By Yao's principle, it
suffices to give distributions over fixed labeled sequences against which
every deterministic learner has large expected regret.

For the $\sqrt{dT}$ term, let $d_0=\min\{d,T\}$ and
$n=\lfloor T/d_0\rfloor$. Use the first $d_0n$ rounds and fill any
remaining rounds with a dummy example whose label is known and on which
all hypotheses agree. Take a class that shatters
$u_1,\ldots,u_{d_0}$, present each $u_i$ for $n$ rounds, and label the
copies by independent fair signs $\sigma_{i,1},\ldots,\sigma_{i,n}$. If
$R_i$ is the set of unrevealed copies of $u_i$, then conditional on the
preview and the past online labels, each not-yet-revealed label is still
fair, so the learner's expected loss on $R_i$ is $|R_i|/2$. The
full-sequence optimum predicts the majority sign
       $ b_i=\operatorname{sign}\Big(\sum_{j=1}^n\sigma_{i,j}\Big)$,
with deterministic tie-breaking. By exchangeability,
       $ \bbE[b_i\sigma_{i,j}]
        =
        \frac1n\,
        \bbE\big|\sum_{j=1}^n\sigma_{i,j}\big|
        =
        \Theta(n^{-1/2})$.
Since the preview is independent of the labels and
$\bbE|R_i|=(1-p)n=\Theta(n)$, the majority classifier has expected
advantage $\Omega(\sqrt n)$ on the real copies of $u_i$. Summing over $i$ gives     $\Omega(d_0\sqrt n)
        =
        \Omega(\min\{T,\sqrt{dT}\})$.

For the $d/p$ term, it remains to consider $d\le T$; set
$\ell=\lfloor T/d\rfloor$.
 Again fill unused rounds with dummy examples. Take $d$ disjoint copies of the threshold class,
        $h_{\theta_1,\ldots,\theta_d}(i,x)
        =
        {\bf 1}\{x\le \theta_i\}$,
which has VC dimension $d$. On each copy, use the depth-$\ell$
Littlestone binary-search tree for thresholds \cite{Littlestone1988}:
choose a random root-to-leaf path and serve its nodes in order. The
sequence is realizable. For one copy, let $J$ be the deepest previewed
level, with $J=0$ if the copy has no previewed node. The deepest previewed
instance reveals the path prefix, but the suffix after level $J$ remains
independent fair path bits, so any learner makes $\Omega(\ell-J)$
expected mistakes on that copy. Moreover,
        $\bbE[\ell-J]
        =
        \sum_{r=1}^{\ell}\bbP\{\ell-J\ge r\}$.
Under exact-size preview sampling, each fixed node is previewed with
probability $p$. Hence, by the union bound, for
$r\le\min\{\ell,1/(2p)\}$ the probability that no preview point lies in
the last $r$ levels is at least $1-pr\ge1/2$. Thus
        $\bbE[\ell-J]
        =
        \Omega(\min\{\ell,1/p\})$.
Summing over the $d$ copies gives
        $\Omega(d\min\{\ell,1/p\})
        =
        \Omega(\min\{T,d/p\})$.

Taking the better of the two constructions and using
$\max\{\min(T,a),\min(T,b)\}\ge \frac12\min\{T,a+b\}$ with
$a=d/p$ and $b=\sqrt{dT}$ gives the claimed
$\Omega(\min\{T,d/p+\sqrt{dT}\})$ lower bound.

\section{Multiclass Classification}\label{sec:multiclass}

We now prove the multiclass theorem.  The main new issue is not the online protocol; it is the label space.  In the binary proof, we constructed a set of predictors by considering the possible labelings, and Sauer--Shelah bounded the number of possible predictors we needed. When $\calY$ is infinite, or simply very large, even for simple classes this strategy is no longer scalable.  For example, consider the class
        $\calH_{\rm const}=\{h_y:x\mapsto y \mid y\in\calY\}$
of classifiers that ignore the instance and always output a fixed label.  This class has Natarajan and DS dimension one, but its trace on any nonempty preview contains one labeling for every $y\in\calY$.
Thus a trace-enumeration argument would introduce an unacceptable dependence on $|\calY|$.

The offline multiclass theory resolves exactly this kind of difficulty by reducing the label space locally.  The reduction has three phases.  First, DS dimension is used to build a finite proxy class that protects the correct predictions of the comparator.  Second, a finite proxy class is converted into a short menu of candidate labels. Finally, once the correct labels are restricted to this menu, Natarajan dimension controls the remaining finite-label problem.  Our contribution in this section is to place this reduction inside the preview/online protocol: the preview constructs the finite objects, and the online labels are used by multiplicative weights.

We prove \Cref{thm:multiclass} after stating the multiclass ingredients used in the proof.

\subsection{Tools from Offline Multiclass Classification} \label{sec:multiclassTools}

 The following three ingredients are standard consequences of \cite{CohenErezHannekeKorenMansourMoranZhang2025}, with Pabbaraju's DS-density theorem~\cite{Pabbaraju2026} plugged into the first step. 
 Since they are stated as ordinary distributional statements, we apply them to the uniform distribution over the fixed length-$T$ indexed sequence.
Formally, let
\begin{align} \label{defn:PT}
        P_T=\frac1T\sum_{t=1}^T \delta_{((t,x_t),y_t)}
\end{align}
be the uniform distribution on the indexed labeled sequence.
We view every $h\in\calH$ as a classifier on indexed examples by writing
     $   h(t,x_t)=h(x_t)$.
The indexing is only bookkeeping: it lets the offline multiclass lemmas apply to the finite population of examples that appears in the preview problem.
 
\paragraph{Tool 1: DS correct-region proxy.}
The first ingredient says that a preview can produce a finite proxy class which protects the correct predictions of any fixed comparator.  The guarantee is one-sided: the proxy only needs to imitate the comparator on points where the comparator is correct.

\begin{lemma}[DS correct-region cover]\label{lem:ds-proxy}
Let $h^\star\in\calH$ be fixed independently of an i.i.d. sample of size $n$ from $P_T$.  There is a procedure which, from this sample, outputs a finite class $\calF_1\subseteq\calY^{[T]\times\calX}$ such that, with probability at least $1-\delta$, we have
        $\log |\calF_1|\le \tO(D),$
and there exists $f^\star\in\calF_1$ satisfying
\[
        P_T\{h^\star(X)=Y,\ f^\star(X)\ne Y\}
        \le
        \tO\!\left(\frac{D+\log(1/\delta)}{n}\right).
\]
\end{lemma}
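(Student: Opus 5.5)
The plan is to build $\calF_1$ by running a DS-based list/PAC learner on many subsamples of the preview and collecting the outputs, in the style of Cohen et al.\ with Pabbaraju's DS-density bound substituted. Concretely, the key fact I will import from \cite{Pabbaraju2026} is that every finite projection of $\calH$ onto $m$ points has one-inclusion hypergraph edge-density at most $\tO(D)$. By the standard orientation argument, this gives a leave-one-out predictor $Q^{\rm DS}$ with mistake rate $\tO(D)/m$ on any $m$-point realizable sample (the multiclass analogue of \Cref{lem:oinclusion}). The difficulty is that $h^\star$ is not realizable: the sample labels are $Y$, not $h^\star(X)$. I will handle this exactly as the ``correct-region'' guarantee suggests. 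I restrict attention to the sub-population $R=\{(t,x_t,y_t):h^\star(x_t)=y_t\}$, on which the sample is realizable by $h^\star$. The predictor only needs to agree with $Y$ there; errors on $R^c$ are never charged.

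Step 1 is the construction. Draw the sample $S$ of size $n$. For every subset $U\subseteq S$ of size at most $k=\tO(D+\log(1/\delta))$, I form the compression-style predictor obtained by feeding $U$, with its observed labels, into a reconstruction map. As a reconstruction map I use a sample-compression scheme for DS classes: DS-learnability with rate $\tO(D)/m$, combined with boosting/majority-vote compression in the manner of Moran--Yehudayoff and David--Moran--Yehudayoff, gives a compression scheme of size $\tO(D)$ for any sample realizable by $\calH$. I then let $\calF_1$ be the set of all reconstructions $\rho(U)$ as $U$ ranges over the subsets of $S$ of size at most $k$, together with the side-information bits. This gives
\[
|\calF_1|\le n^{\tO(D)},
\]
so $\log|\calF_1|=\tO(D)$ with logarithms in $n$ absorbed into the $\tO$. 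Crucially, the family is determined by $S$ alone, not by $h^\star$.

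Step 2 shows that some member works. Let $S_R=S\cap R$, which is realizable by $h^\star$. Compressing $S_R$ yields a set $U^\star\subseteq S_R$ of size $\tO(D)$, and $f^\star=\rho(U^\star)\in\calF_1$ is correct on all of $S_R$. Because $h^\star$ is fixed before sampling, conditioned on its size $S_R$ is an i.i.d.\ sample from $P_T$ restricted to $R$. I then apply the standard compression generalization bound (Littlestone--Warmuth / Floyd--Warmuth), union-bounding over the $\binom{n}{\le k}$ choices of the compression set. This gives, with probability at least $1-\delta$,
\[
P_T\{X\in R,\ f^\star(X)\ne Y\}\le \tO\big((D+\log(1/\delta))/n\big).
\]
The point of the compression route is that the consistency event only involves points of $R$, because the points of $S\setminus R$ are simply never required to be fit.

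The main obstacle is Step 1: the existence of an $\tO(D)$-size compression scheme for DS classes with possibly infinite $\calY$. Moran--Yehudayoff boosting needs a weak learner with constant error from $\tO(D)$ samples, and the DS-density bound provides exactly that via the one-inclusion orientation. The majority vote over $\tO(\log m)$ weak hypotheses is then well defined even with infinitely many labels, since it is a plurality over finitely many candidate outputs. I would try this first. If the boosting step is problematic in the multiclass setting, I would instead cite the list-learning and compression result of Brukhim et al.\ \cite{BrukhimCarmonDinurMoranYehudayoff2022}, with Pabbaraju's density bound replacing their $D^{3/2}$-type bound, to obtain the $\tO(D)$ size directly.
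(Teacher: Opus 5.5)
Your proposal is correct and follows the paper's route: the paper imports the correct-region cover of Cohen et al.\ (enumerate the reconstructions of a realizable compression scheme over all small sub-samples, and take the one that compresses the sub-sample where $h^\star$ is correct), with Pabbaraju's DS-density bound supplying the $\tO(D)$ compression size, and you have unpacked that construction together with its one-sided compression generalization bound. One small fix: take $k=\tO(D)$, the compression size, which does not depend on $\delta$, rather than $k=\tO(D+\log(1/\delta))$, so that $\log|\calF_1|$ does not pick up a $\log(1/\delta)$ term.
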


\begin{proof}[Proof sketch]
This is the correct-region cover construction of \cite[Theorem 3.2]{CohenErezHannekeKorenMansourMoranZhang2025}, applied with the realizable compression/list-learning component supplied by their Proposition 3.3.  Pabbaraju's DS-density theorem~\cite{Pabbaraju2026} gives the optimal DS-controlled density/compression parameter for this component.  Substituting that bound into the \cite{CohenErezHannekeKorenMansourMoranZhang2025} construction gives a proxy class of logarithmic size $\tO(D)$ and the displayed one-sided error bound.  The statement is pointwise in the fixed comparator, which is what we need because $h^\star$ is chosen independently of the preview.
\end{proof}

\paragraph{Tool 2: Menus from finite classes.}
The second ingredient converts a finite proxy class (obtained from \Cref{lem:ds-proxy}) into a short menu of labels.  The menu is local: for each indexed example $X$, it is a small set $\mu(X)\subseteq\calY$.  The guarantee is again one-sided: when a proxy classifier is correct, its label is usually included in the menu.

\begin{lemma}[Finite-class menu reduction]\label{lem:finite-menu}
Let $\calF$ be a finite class of classifiers from $\Omega$ to $\calY$, and let $P$ be a distribution over $\Omega\times\calY$. Given $n$ labeled samples from $P$, there is a procedure returning a menu $\mu:\Omega\to 2^\calY$ such that
        $\log\max_{\omega\in\Omega}|\mu(\omega)|\le O(\log n)$,
and for every fixed $f^\star\in\calF$ chosen independently of the sample used to build $\mu$, with probability at least $1-\delta$,
\[
        P\{f^\star(X)=Y,\ f^\star(X)\notin\mu(X)\}
        \le
        C\frac{\log|\calF|+\log(1/\delta)}{n}.
\]
\end{lemma}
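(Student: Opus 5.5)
The menu I would use is the union of the predictions of the \emph{plausible} members of $\calF$. Let the sample be $(X_1,Y_1),\ldots,(X_n,Y_n)$. I would build $\mu$ from a sparse cover of the classifiers that look empirically accurate on the region where they are correct. Bounding the menu size by $\mathrm{poly}(n)$ forces a specific construction, because the union over all of $\calF$ could be as large as $|\calF|$. So the approach is a one-inclusion/compression-style list learner, in the spirit of \cite{CohenErezHannekeKorenMansourMoranZhang2025}.

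\textbf{Step 1 (construction).} For each $\omega$, let $\mu(\omega)$ be the set of labels $y$ for which some $f\in\calF$ has $f(\omega)=y$ and $f$ is \emph{consistent with the sample on its correct region}. Concretely, the rule should say that appending $(\omega,y)$ to the sample keeps $y$ among the labels selected by a leave-one-out majority-type rule applied to the augmented $(n+1)$-point set. The rule I would try first is the following. Put $y\in\mu(\omega)$ iff, in the augmented sample $S\cup\{(\omega,y)\}$, the point $(\omega,y)$ is not among the points whose label is "hard to predict" by a fixed list-orientation of size $n+1$.

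\textbf{Step 2 (menu size).} An orientation-based list rule on $n+1$ points assigns each point a list of at most $n+1$ labels. Since the construction only ever uses labels present in the augmented sample together with one test label, the list size is at most $n+1$. This gives $\log|\mu(\omega)|\le O(\log n)$ uniformly in $\omega$.

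\textbf{Step 3 (error).} Fix $f^\star$ independently of the sample. Consider the event that a fresh $(X,Y)$ has $f^\star(X)=Y$ but $Y\notin\mu(X)$. Here I would apply exchangeability, exactly as in \Cref{lem:oinclusion}. Take $n+1$ i.i.d.\ points and count the points $i$ that are correct for $f^\star$ yet excluded when held out. A leave-one-out argument should show that this count is at most $O(\log|\calF|)$ deterministically: each exclusion would certify the elimination of a distinct "halving" of the candidate set, so at most $\log_2|\calF|$ exclusions can occur. This gives an expected error of $O(\log|\calF|/n)$.

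\textbf{Step 4 (high probability).} To convert the expectation bound into the stated probability-$1-\delta$ bound, I would split the sample into $O(\log(1/\delta))$ blocks and take the union of the block menus. The menu size stays $\mathrm{poly}(n)$, and the failure event now requires every block to fail, so its probability decays geometrically. Alternatively, a sample-compression bound of size $O(\log|\calF|)$ yields the additive $\log(1/\delta)/n$ term directly.

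\textbf{Main obstacle.} The hard step is Step 3's deterministic halving count. It needs a precise rule, along the lines of the Halving or majority-vote list rule over the version space restricted to each classifier's correct region. The rule must be one-sided, so that mistakes of $f^\star$ are never charged. My plan is to run Halving on $\calF$ in an online-to-batch manner over a random ordering of the sample. Then $\mu(\omega)$ is the union, over the $n+1$ prefixes, of the weighted-majority predictions at $\omega$ over classifiers that have not yet made a charged mistake. This union automatically gives $|\mu|\le n+1$. The Halving mistake bound $\log_2|\calF|$ on $f^\star$'s correct rounds, averaged over a uniformly random insertion position as in \Cref{cor:oinclusion}, then gives the $O(\log|\calF|/n)$ rate.
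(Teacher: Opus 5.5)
Your key step is the claimed deterministic $O(\log|\calF|)$ bound on the rounds where $f^\star$ is correct but the prediction misses. This bound does not hold for the rule you describe, and you never specify the ``charged mistake'' rule that would make it one-sided.

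Halving is a realizable-case argument: it eliminates classifiers when they err. An agnostic $f^\star$ errs on sample points, so it gets eliminated, and after that nothing protects its correct region. Worse, the averaging step fails for any single-label prefix predictor, whatever the update rule. Take $\calF=\{f_1,f_2\}$ with $f_1\equiv a$ and $f_2\equiv b$, and let $Y$ be uniform on $\{a,b\}$ independently of $X$. For any $\hat y_t$ computed from the past, $\Pr[y_t=a\ne \hat y_t]+\Pr[y_t=b\ne\hat y_t]\ge 1/2$. So for one of $f_1,f_2$ the expected number of rounds with $f^\star(x_t)=y_t\ne\hat y_t$ is at least $n/4$, and bounding the union's error by the average prefix error yields only a constant. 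The union menu may well contain both labels here, but your analysis cannot show it.

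The orientation rule of Steps 1--3 does not fill the gap, because it is never specified. Step 2's size bound also does not follow: membership is tested separately for each candidate $y$, so nothing limits the accepted labels to those appearing in the sample.

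The missing idea is a promotion-only multiplicative-weights update that outputs a list per round rather than a single label.
\begin{itemize}
\item Start with $w(f)=1$ for all $f$. At round $t$ let $L_t(x)=\{y:\sum_{f:f(x)=y}w_t(f)\ge \eps W_t\}$ with $\eps=\log_2|\calF|/n$, so $|L_t(x)|\le 1/\eps\le n$.
\item When $y_t\notin L_t(x_t)$, double the weights of the classifiers predicting $y_t$, and never decrease any weight.
\item Each miss multiplies $W$ by less than $1+\eps$, whether or not $f^\star$ was correct. Meanwhile $w(f^\star)$ doubles on every one-sided miss.
\item Hence $2^{M^\star}\le |\calF|(1+\eps)^n$, so $M^\star\le \log_2|\calF|+2\eps n=O(\log|\calF|)$ deterministically.
\item The menu $\mu=\bigcup_t L_t$ has size at most $n^2$.
\end{itemize}
This is in the spirit of the multiplicative-weights procedure the paper imports from Theorem 3.4 of \cite{CohenErezHannekeKorenMansourMoranZhang2025}. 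The paper does not reprove it; that procedure's menu is the set of predictions of at most $n$ classifiers from $\calF$.

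Your Step 4 also falls short of the stated bound. Splitting into $O(\log(1/\delta))$ blocks and applying Markov gives $O(\log|\calF|\cdot\log(1/\delta)/n)$, a product rather than the stated sum. That would suffice for the paper's application with $\delta=T^{-4}$, but it is not the lemma as stated. The compression alternative is unsupported, because the weight state depends on every miss round, not on $O(\log|\calF|)$ points. The additive form instead follows by writing $\mathrm{err}(L):=P\{f^\star(X)=Y\notin L(X)\}$ and combining:
\begin{itemize}
\item $\mathrm{err}(\mu)\le\frac1n\sum_{t}\mathrm{err}(L_{t-1})$, since $\mu\supseteq L_{t-1}$ for every $t$;
\item a Freedman-type online-to-batch inequality, $\sum_t \mathrm{err}(L_{t-1})\le 2M^\star+O(\log(1/\delta))$ with probability $1-\delta$, together with the deterministic bound on $M^\star$.
\end{itemize}
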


\begin{proof}[Proof sketch]
\cite[Theorem 3.4]{CohenErezHannekeKorenMansourMoranZhang2025} gives a multiplicative-weights procedure for a finite class $\calF_1$.  The procedure outputs a list of at most $n$ classifiers $g_1,\ldots,g_r\in\calF_1$ and uses the induced menu
        $\mu(X)=\{g_1(X),\ldots,g_r(X)\}$.
Hence $\max_X|\mu(X)|\le r\le n$, which gives the menu-size bound. 
The same theorem gives, for any fixed comparator $f^\star\in\calF$ chosen independently of the sample used to build the menu, the one-sided guarantee
\[
        P\{f^\star(X)=Y,\ f^\star(X)\notin\mu(X)\}
        ~\le~
        C\frac{\log|\calF|+\log(1/\delta)}{n}.  \qedhere
\]
\end{proof}

\paragraph{Tool 3: Natarajan cover inside a menu.}
The final ingredient uses the menu to reduce the effective label space.  Once the correct label is known to lie in a menu of size $M$, the relevant finite-label complexity is controlled by Natarajan dimension, up to a $\log M$ factor.

\begin{lemma}[Natarajan cover inside a menu]\label{lem:nat-menu-cover}
Fix a menu $\mu:[T]\times\calX\to2^{\calY}$ with $\max_X|\mu(X)|\le M$.  Let $h\in\calH$ be fixed independently of an i.i.d. sample of size $n$ from $P_T$.  There is a procedure which, from this sample and the menu $\mu$, outputs a finite class $\calF_2\subseteq\calY^{[T]\times\calX}$ such that, with probability at least $1-\delta$, we have
        $\log |\calF_2|\le \tO(N\log M)$,
and there exists $g_h\in\calF_2$ satisfying
\[
        P_T\{h(X)=Y,\ h(X)\in\mu(X),\ g_h(X)\ne Y\}
        \le
        \tO\!\left(\frac{N\log M+\log(1/\delta)}{n}\right).
\]
\end{lemma}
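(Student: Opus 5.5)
This is the third ingredient imported from the offline multiclass theory, so I would prove it as in \cite{CohenErezHannekeKorenMansourMoranZhang2025}. The idea is to reduce to a finite-label problem and then run a sample-compression/cover argument controlled by Natarajan dimension. First, relabel each indexed example $X$ by a bijection between $\mu(X)$ and a subset of $[M]$. Composing $h$ with this relabeling, and sending labels outside $\mu(X)$ to a dummy symbol $\bot$, gives a class $\calH_\mu$ over the label set $[M]\cup\{\bot\}$. A Natarajan-shattered set for $\calH_\mu$ pulls back to one for $\calH$, since distinct menu labels stay distinct, so $d_{\rm Nat}(\calH_\mu)\le N$. The event $\{h(X)=Y,\ h(X)\in\mu(X)\}$ is exactly the event that the relabeled $h$ is correct and does not output $\bot$.

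Next, I would build $\calF_2$ from the sample $S$ of size $n$. The Natarajan version of Sauer--Shelah bounds the number of traces of $\calH_\mu$ on any $m$ points by $(m(M+1)^2)^N$. For the one-sided guarantee, a realizable learner for the correct region suffices. Concretely, I would consider the subsamples $S'\subseteq S$ of points whose labels lie in their menus. For each such $S'$ of size at most $k=\tO(N\log M)$, and each consistent trace, take the multiclass one-inclusion or ERM completion (relabeled back to $\calY$). This gives $\log|\calF_2|\le k\log n+\tO(N\log(nM))=\tO(N\log M)$ after absorbing logarithms.

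The core step is the existence of $g_h$. I would take the subsample $S_h=\{(X_i,Y_i)\in S: h(X_i)=Y_i\in\mu(X_i)\}$. On it, $h$ is realizable, and this subsample is i.i.d. from $P_T$ conditioned on the correct-in-menu region, with a random size. A realizable multiclass sample-compression scheme for Natarajan classes over $M+1$ labels has size $\tO(N\log M)$; one can get it by boosting a weak learner obtained from one-inclusion, with error $O(N\log M/m)$. Compress $S_h$ to $k$ points and let $g_h$ be the reconstruction, which lies in $\calF_2$ by construction. The standard compression generalization bound then gives error at most $\tO((k\log n+\log(1/\delta))/n)$ under the conditional distribution, weighted by the region's mass. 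This is exactly the displayed one-sided bound, since $g_h$ only needs to agree with $Y$ on that region.

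I expect the main obstacle to be the compression step, specifically keeping the size at $\tO(N\log M)$ with no dependence on $|\calY|$. I would handle it by working only with $[M]$-valued relabelings, where Natarajan-based compression (Daniely--Shalev-Shwartz style) applies directly. A second delicate point is independence: the compressed indices are chosen from $S$ itself. So I would use the standard union bound over the $\binom{n}{\le k}$ index sets and the choices of trace, evaluating each candidate on the remaining points.
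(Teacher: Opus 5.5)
\paragraph{Verdict: genuine gap.} Your outline matches the route the paper imports by citing Theorem~3.5 and Proposition~3.6 of \cite{CohenErezHannekeKorenMansourMoranZhang2025}: relabel each menu into $[M]$, compress the correct-in-menu subsample $S_h$ with a realizable scheme of size $\tO(N\log M)$, and apply the compression generalization bound with a union bound over index sets on the fixed region $\{h(X)=Y\in\mu(X)\}$.

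However, the step meant to produce the $N\log M$ dependence is false. Collapsing all out-of-menu labels to one new label $\bot$ does not preserve Natarajan dimension. A Natarajan witness for $\calH_\mu$ may take the value $\bot$, and $\bot$ has many preimages, so the shattering does not pull back to $\calH$.

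Concretely, let $\calX=[d]$ and $\calH=\{h_A:A\subseteq[d]\}$, with $h_A(i)=0$ for $i\in A$ and $h_A(i)=\langle A\rangle$ (a label encoding $A$) otherwise.
\begin{itemize}
\item A nonzero label determines the hypothesis, so no two points are Natarajan-shattered, and $N=1$.
\item Take $\mu\equiv\{0\}$, so $M=1$. Your $\calH_\mu$ Natarajan-shatters all of $[d]$ with witnesses $1$ and $\bot$, and has $2^d$ traces.
\item This contradicts your bound $(m(M+1)^2)^N=4d$ once $d\ge5$.
\end{itemize}
So neither the trace count nor the claimed compression size for $\calH_\mu$ follows.

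The example also shows that the choice of weak learner matters. Take $y\equiv0$ and $h=h_{[d]}$; then all of $[d]$ lies in the correct-in-menu region.
\begin{itemize}
\item An ERM over $\calH$ given a subsample $J$ may return $h_J$, which errs on all of $[d]\setminus J$.
\item The one-inclusion predictor built on $\calH_\mu$ is a full hypercube, so every orientation has average leave-one-out error $1/2$.
\end{itemize}

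\paragraph{The missing idea.} Treat out-of-menu values as \emph{undefined} rather than as a label; that is, work with the partial class induced by the menu. On a finite indexed set $S'$, keep only the patterns of $\calH$ that are menu-valued at every point of $S'$.
\begin{itemize}
\item A Natarajan shattering by such patterns uses two genuine, distinct menu labels at each point, so it does pull back to $\calH$.
\item Hence there are at most $(|S'|M^2)^N$ such patterns, their graph dimension is $O(N\log M)$, and the one-inclusion hypergraph on them has leave-one-out error $O(N\log M/|S'|)$.
\item Since $h$ is menu-valued and correct on $S_h$, the subsample $S_h$ is realizable in this partial sense.
\end{itemize}

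Use this one-inclusion predictor as the weak learner and boost; not ERM, and not the $\bot$-class. This gives a compression scheme of size $\tO(N\log M)$ whose reconstruction functions are total. Then $\calF_2$ can range over reconstructions from labeled subsets of $S$ of size at most $k$, plus side information; no trace enumeration is needed. Your union-bound generalization argument on the fixed region then completes the proof.
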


\begin{proof}[Proof sketch]
This is the menu-restricted cover theorem of \cite[Theorem 3.5]{CohenErezHannekeKorenMansourMoranZhang2025}.  Their Proposition 3.6 gives the underlying compression/list-learning bound with size controlled by $d_{\rm Nat}(\calH)\log M$, because the menu restricts the effective label space to $M$ labels at each point.  Applying their construction to the fixed menu $\mu$ yields a finite class of logarithmic size $\tO(N\log M)$ and the displayed one-sided error guarantee for the fixed comparator $h$.
\end{proof}

\subsection{Proof of the Multiclass Upper Bound in \Cref{thm:multiclass}}

Since the multiclass offline tools in \Cref{sec:multiclassTools} are for i.i.d. arrivals, we define $P_T$ to be the uniform distribution over the indexed labeled sequence, as in \eqref{defn:PT}.  Choose three sample sizes
\[
        n_1=n_2=n_3= \Big\lfloor\frac{m}{3} \Big\rfloor .
\]
The following \Cref{lem:exact-preview-coupling} shows how to obtain three independent i.i.d. samples from $P_T$, of sizes $n_1,n_2,n_3$, from the preview of size $m$.

\begin{lemma}[Exact preview can supply i.i.d. sub-samples]\label{lem:exact-preview-coupling}
%Fix integers $n_1,\ldots,n_k$ with $n_1+\cdots+n_k\le m$.  
There is a randomized procedure which, given a uniformly random preview $\Sample\subseteq[T]$ of size $m$, outputs index sequences
\[
        I^{(a)}_1,\ldots,I^{(a)}_{n_a}\in\Sample
        \quad \text{for } a\in\{1,2,3\},
\]
such that, unconditionally, the full collection of indices 
       $\bigl(I^{(a)}_j\bigr)_{a\in[3],\,j\in[n_a]}$
is distributed as independent uniform samples from $[T]$.
\end{lemma}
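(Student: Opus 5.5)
The plan is a direct coupling. Sample $n:=n_1+n_2+n_3\le m$ i.i.d.\ uniform indices from $[T]$, note that the set of their distinct values is a uniformly random subset of $[T]$ of random size at most $n\le m$, and embed that subset inside $\Sample$. After that, cutting the length-$n$ sequence into consecutive blocks of lengths $n_1,n_2,n_3$ gives the three families $I^{(a)}_j$.

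\textbf{Step 1 (decomposing an i.i.d.\ sequence).} Let $J=(J_1,\ldots,J_n)$ be i.i.d.\ uniform on $[T]$. Write $U=\{J_1,\ldots,J_n\}$ and $D=|U|\le n$. Encode $J$ by $(D,\rho,\tau)$ as follows:
\begin{itemize}
\item $\rho:[n]\to[D]$ is the \emph{pattern}: $\rho(j)$ is the rank of the first occurrence of the value $J_j$.
\item $\tau:[D]\to U$ sends $r$ to the value whose first occurrence has rank $r$.
\end{itemize}
Then $J_j=\tau(\rho(j))$, and the map $J\mapsto(\rho,\tau)$ is a bijection onto pairs consisting of a surjective first-occurrence pattern and an injection $[D]\to[T]$. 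Every sequence has probability $T^{-n}$, and for a fixed pattern every injection $\tau$ is equally admissible. Hence, conditional on $D$ (or on $\rho$), the injection $\tau$ is uniform among injections $[D]\to[T]$. In particular $U$ is a uniform $D$-subset of $[T]$, and given $U$ the bijection $\tau$ is uniform and independent of $\rho$.

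\textbf{Step 2 (the procedure).} Given $\Sample$:
\begin{enumerate}
\item Draw $(D,\rho)$ from its true joint law, using independent randomness. This law depends only on $n$ and $T$.
\item Choose a uniformly random $D$-subset $U'\subseteq\Sample$. This is possible since $D\le n\le m$.
\item Choose a uniform bijection $\tau':[D]\to U'$.
\item Output $J'_j=\tau'(\rho(j))$.
\end{enumerate}
Since $\Sample$ is a uniform $m$-subset of $[T]$ and is independent of $(D,\rho)$, a uniform $D$-subset of it is marginally a uniform $D$-subset of $[T]$. So $(D,\rho,U',\tau')$ has the same joint law as $(D,\rho,U,\tau)$ from Step 1. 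Hence $J'$ has the law of $n$ i.i.d.\ uniform draws from $[T]$, and by construction every $J'_j\in\Sample$.

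\textbf{Step 3 (splitting into blocks).} Take $I^{(1)}$ to be the first $n_1$ coordinates of $J'$, $I^{(2)}$ the next $n_2$, and $I^{(3)}$ the last $n_3$. Because the whole vector is i.i.d.\ uniform, the full collection is distributed as independent uniform samples, unconditionally, as claimed.

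\textbf{Main obstacle.} The only delicate point is the symmetry claim in Step 1: that, conditionally on the pattern, the distinct-value set is uniform and independent of the pattern. I would verify it by counting: for each fixed pattern with $D$ blocks, exactly $T(T-1)\cdots(T-D+1)$ sequences realize it, one for each injection. The statement is only about the unconditional law, so no independence from $\Sample$ is claimed, which is consistent with the construction, since the output indices lie in $\Sample$.
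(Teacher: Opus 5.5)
Your proof is correct. It realizes the same coupling as the paper, but builds it in the opposite direction.

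The paper's route:
\begin{itemize}
\item It first draws the $n$ i.i.d.\ indices.
\item It then draws $\Sample$ uniformly among the $m$-subsets containing their distinct values.
\item It gets uniformity of the $\Sample$-marginal from invariance under permutations of $[T]$.
\item It then simply asserts that, after seeing $\Sample$, the learner can sample the indices from their conditional law.
\end{itemize}

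You start from the given $\Sample$ and write that conditional sampler down explicitly, using the pattern/injection decomposition.

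A short computation shows that the two constructions produce the identical joint law. Let $U(j)$ be the set of distinct entries of $j$ and $D=|U(j)|$. Then
\[
\Pr(\Sample=S,\,J=j)=T^{-n}\binom{T-D}{m-D}^{-1}\mathbf{1}\{U(j)\subseteq S\}.
\]
On your side this follows from $\Pr(\rho)=T(T-1)\cdots(T-D+1)/T^{n}$ together with $\binom{T}{m}\binom{m}{D}=\binom{T}{D}\binom{T-D}{m-D}$. So your Step~2 is precisely the conditional sampler the paper alludes to.

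What each approach buys:
\begin{itemize}
\item The paper's version is shorter. Symmetry does all the work, and the existence of a conditional sampler on a finite space is automatic.
\item Yours costs an extra counting step on first-occurrence patterns. In exchange, it exhibits the randomized procedure the lemma actually asks for. It also makes transparent that the procedure uses only $\Sample$, $T$, $n$ and fresh randomness.
\end{itemize}

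Both arguments establish only the unconditional law, which is all that the later stages use.
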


\begin{proof}
It suffices to describe a coupling.  First draw $n=n_1+n_2+n_3$ independent uniform indices $I_1,\ldots,I_n$ from $[T]$.  Let $D_I\subseteq[T]$ be the set of distinct indices appearing among them.  Since $n\le m$, we can then choose $\Sample$ uniformly among all $m$-subsets of $[T]$ containing $D_I$.  Finally, split the sequence $I_1,\ldots,I_n$ into consecutive blocks of lengths $n_1,
\ldots,n_3$.

By construction, all sampled indices lie in $\Sample$ and the blocks are independent uniform samples from $[T]$.  The marginal distribution of $\Sample$ is uniform over all $m$-subsets: the construction is invariant under every permutation of $[T]$, and therefore assigns the same probability to every $m$-subset.  Thus this coupling has exactly the same preview marginal as the model.  Equivalently, after observing $\Sample$, the learner can sample the block indices from the corresponding conditional distribution; this conditional distribution depends only on $\Sample$, $T$, and the block lengths, and all queried labels are available because the indices lie in $\Sample$.
\end{proof}

We will now use the three sampled index sequences $I^{(1)},I^{(2)},I^{(3)}$ obtained from \Cref{lem:exact-preview-coupling}  for the three stages below.  Let $\delta=T^{-4}$.

\paragraph{Stage 1: build a DS proxy class.}
Apply \Cref{lem:ds-proxy} to the first sampled index sequence $I^{(1)}=(I^{(1)}_1,\ldots,I^{(1)}_{n_1})$ with the fixed comparator $h^\star$. With probability at least $1-\delta$, we obtain a class $\calF_1$ satisfying
        $\log|\calF_1|\le \tO(D)$
and a proxy $f^\star\in\calF_1$ such that
\[
        \big|\{t\in[T]: h^\star(x_t)=y_t,
        \ f^\star(t,x_t)\ne y_t\}\big|
        ~\le~
        T\cdot \tO\!\left(\frac{D+\log(1/\delta)}{n_1}\right) 
        ~\le~ \tO\!\left(\frac Dp\right),
\]
where the last inequality uses $n_1=\Theta(pT)$ and absorbs logarithms into $\tO(\cdot)$.

\paragraph{A useful DS-only stopping point.}
If one is content with a bound depending only on DS dimension (i.e. not the improved guarantee depending on Natarajan dimension), this already suffices.  Indeed, after constructing $\calF_1$, one can run multiplicative weights directly over $\calF_1$ during the online phase.  The proxy error contributes $\tO(D/p)$ and the online aggregation contributes
        $\sqrt{T\log |\calF_1|}=\tO(\sqrt{DT})$.
This gives
\[
        \bbE\big[L_{\Real}(\calA)-L_{\Real}(h^\star)\big]
        ~\le~
        \tO\!\left(\frac Dp+\sqrt{DT}\right).
\]
The remaining two ingredients are used only to improve the second term from $\sqrt{DT}$ to $\sqrt{NT}$.

\paragraph{Stage 2: build a menu.}
Apply \Cref{lem:finite-menu} to $\calF_1$ using the second sampled index sequence $I^{(2)}=(I^{(2)}_1,\ldots,I^{(2)}_{n_2})$. Conditional on the outcome of Stage 1, fix one proxy $f^\star\in\calF_1$ satisfying the Stage 1 guarantee, using an arbitrary deterministic tie-breaking rule if there are several. Since the second sampled sequence is independent of the first, the pointwise guarantee of \Cref{lem:finite-menu} applies to this fixed $f^\star$. Thus, with probability at least $1-\delta$, the resulting menu $\mu$ satisfies
        $\log M:=\log\max_X|\mu(X)|\le O(\log T)$
and
\[
        P_T\{f^\star(X)=Y,\ f^\star(X)\notin\mu(X)\}
        ~\le~
        C\frac{\log|\calF_1|+\log(1/\delta)}{n_2}.
\]
Since $P_T$ is uniform over the fixed indexed sequence, this implies
\[
        \big|\{t\in[T]: f^\star(t,x_t)=y_t,\
        f^\star(t,x_t)\notin\mu(t,x_t)\}\big|
        ~\le~
        \tO\!\left(\frac Dp\right).
\]
Combining this with the Stage 1 error, we get that the menu contains the correct label of $h^\star$ on all but $\tO(D/p)$ of the points where $h^\star$ is correct:
\[
        \big|\{t\in[T]: h^\star(x_t)=y_t,\
        h^\star(x_t)\notin\mu(t,x_t)\}\big|
        ~\le~
        \tO\!\left(\frac Dp\right).
\]
Indeed, if $h^\star(x_t)=y_t$ but $h^\star(x_t)\notin\mu(t,x_t)$, then either $f^\star(t,x_t)\ne y_t$, or $f^\star(t,x_t)=y_t$ and that label is missing from the menu.

\paragraph{Stage 3: build a Natarajan-size expert class.}
Condition on the menu $\mu$ produced in Stage 2. 
Since the third sampled sequence $I^{(3)}=(I^{(3)}_1,\ldots,I^{(3)}_{n_3})$
is independent of the first two sampled sequences,  we may apply \Cref{lem:nat-menu-cover} with this fixed menu and the fixed comparator $h^\star$.  With probability at least $1-\delta$, we obtain a class $\calF_2$ with
\[
        \log|\calF_2|~\le~ \tO(N\log M) ~=~ \tO(N)
\]
and a classifier $g^\star\in\calF_2$ such that
\[
        \big|\{t\in[T]: h^\star(x_t)=y_t,
        \ h^\star(x_t)\in\mu(t,x_t),
        \ g^\star(t,x_t)\ne y_t\}\big|
        ~\le~
        \tO\!\left(\frac{N\log M}{p}\right)
        ~\le~
        \tO\!\left(\frac Dp\right),
\]
where the final inequality uses $N\le D$ and absorbs the $\log M=O(\log T)$ factor into $\tO(\cdot)$.

We now compare $g^\star$ with $h^\star$.  On a point where $h^\star$ is wrong, the excess loss of $g^\star$ over $h^\star$ is at most zero, since both losses are in $\{0,1\}$ and $h^\star$ already pays one unit.
 On a point where $h^\star$ is correct, $g^\star$ can be wrong only if either the correct label of $h^\star$ is missing from the menu, or it is in the menu but $g^\star$ fails to predict it.  The two bounds above therefore imply
\[
        L_{\Real}(g^\star)-L_{\Real}(h^\star)
        ~\le~
        \tO\!\left(\frac Dp\right).
\]
We used counts over $[T]$ to bound counts over $\Real$.

\paragraph{Stage 4: online aggregation.}
Finally, during the online phase, run multiplicative weights over the finite expert class $\calF_2$.  On round $t\in\Real$, expert $g\in\calF_2$ predicts $g(t,x_t)$, and the algorithm predicts by sampling an expert from the current multiplicative-weights distribution.  The expected loss is the weighted average of the expert losses.  
Conditioned on the good event from Stage 3, $g^\star$ is a fixed expert in $\calF_2$. Applying the standard full-information experts bound to this expert and using $\log|\calF_2|\le\tO(N)$ gives
\[
        \bbE_{\calA}\!\left[L_{\Real}(\calA)-L_{\Real}(g^\star)\right]
        \le \tO(\sqrt{NT}).
\]
Combining with the comparison between $g^\star$ and $h^\star$, we get on the intersection of the three good events
\[
        \bbE_{\calA}\!\left[
        L_{\Real}(\calA)-L_{\Real}(h^\star)
        \right]
        \le
        \tO\!\left(\frac Dp+\sqrt{NT}\right).
\]
The three imported tools fail with probability at most $3\delta$, and the loss difference is always bounded by $T$.  With $\delta=T^{-4}$, the failure contribution to expectation is negligible.  This proves \Cref{thm:multiclass}.

\paragraph{Acknowledgments.}
The authors are thankful to the organizers of the PhD School on Intersections of Algorithms and Machine Learning Theory, where this project began.
The authors used GPT-5.5 Pro for assistance with writing and  to explore
proof strategies. The authors have checked the arguments and take full responsibility for all content of the paper.

{%\small
\bibliographystyle{alpha}
\bibliography{bib}
}

\end{document}